\mathchardef\mhyphen="2D
\newcommand{\norm}[1]{{ \left\lVert#1\right\rVert }}
\newcommand{\normb}[1]{\norm{#1}_2}
\newcommand{\normbs}[1]{\normb{#1}^2}
\newcommand{\vertiii}[1]{{\left\vert\kern-0.25ex\left\vert\kern-0.25ex\left\vert #1
    \right\vert\kern-0.25ex\right\vert\kern-0.25ex\right\vert}}
\newcommand{\vect}[1]{{\boldsymbol{#1}}}
\def\bbeta{\vect{\beta}}
\def\bu{{\mathbf{u}}}
\def\bv{{\mathbf{v}}}
\def\bx{{\mathbf{x}}}
\def\by{{\mathbf{y}}}
\def\bz{{\mathbf{z}}}
\def\bC{{\mathbf{C}}}
\def\bI{{\mathbf{I}}}
\def\bP{{\mathbf{P}}}
\def\bT{{\mathbf{T}}}
\def\bX{{\mathbf{X}}}
\def\0{{\mathbf{0}}}
\def\bbE{{\mathbb{E}}}
\def\bbP{{\mathbb{P}}}
\def\bbR{{\mathbb{R}}}
\def\sfA{\mathsf{A}}
\def\sfB{\mathsf{B}}
\def\sfC{\mathsf{C}}
\def\sfG{\mathsf{G}}
\def\sfL{\mathsf{L}}
\def\sfS{\mathsf{S}}
\def\sfT{\mathsf{T}}
\def\sfU{\mathsf{U}}
\newcommand{\inprod}[2]{\langle#1, #2\rangle}
\newtheorem*{rep@theorem}{\rep@title}
\newcommand{\newreptheorem}[2]{%
\newenvironment{rep#1}[1]{%
 \def\rep@title{#2 \ref{##1}}%
 \begin{rep@theorem}}%
 {\end{rep@theorem}}}
\newtheorem{theorem}{Theorem}
\newtheorem{definition}{Definition}
\newtheorem{corollary}{Corollary}
\newtheorem{lemma}{Lemma}
\renewcommand{\text}[1]{{\textnormal{#1}}}
\DeclareMathOperator*{\argmax}{arg\,max}
\newcommand{\greedy}{\textsc{Greedy}}
\newcommand{\stogreedy}{\textsc{StochasticGreedy}}
\newcommand{\distgreedy}{\textsc{DistributedGreedy}}
\title{Scalable Greedy Feature Selection via Weak Submodularity}
\author[1]{Rajiv Khanna}
\author[1]{Ethan R. Elenberg}
\author[1]{Alexandros G. Dimakis}
\author[2]{Sahand Negahban}
\author[1]{Joydeep Ghosh}
\affil[1]{Department of Electrical and Computer Engineering \authorcr The University of Texas at Austin \authorcr \texttt{\{rajivak,\,elenberg\}@utexas.edu}, \texttt{dimakis@austin.utexas.edu}, \texttt{jghosh@utexas.edu}}
\affil[2]{Department of Statistics \authorcr Yale Univeristy \authorcr \texttt{sahand.negahban@yale.edu}}
\begin{document}
\date{}
\maketitle

\begin{abstract}Greedy algorithms are widely used for problems in machine learning such as feature selection and set function optimization. 
Unfortunately, for large datasets, the running time of even greedy algorithms can be quite high. 
This is because for each greedy step we need to refit a model or calculate a function using the previously selected choices and the new candidate. 

Two algorithms that are faster approximations to the greedy forward selection were introduced recently~\citep{Mirzasoleiman:2013vi,Mirzasoleiman:2015th}.
They achieve better performance by exploiting distributed computation and stochastic evaluation respectively.
Both algorithms have provable performance guarantees for submodular functions.

In this paper we show that divergent from previously held opinion, submodularity is not required to obtain approximation guarantees for these two algorithms.
Specifically, we show that a generalized concept of weak submodularity suffices to give multiplicative approximation guarantees. Our result extends the applicability of these algorithms to a larger class of functions.
Furthermore, we show that a bounded submodularity ratio can be used to provide data dependent bounds that can sometimes be tighter also for submodular functions. We empirically validate our work by showing superior performance of fast greedy approximations versus several established baselines on artificial and real datasets. 
\end{abstract}
\section{Introduction}

Consider the problem of sparse linear regression:
\begin{equation}
\label{eqn:linearRegression}
\min_\bbeta \|\by - \bX \bbeta\|^2_2 \,\,\text{   s.t.  } \|\bbeta\|_0 \leq k,
\end{equation}

where $\bX \in \bbR^{n \times d}$ is the design matrix (also called feature matrix) for $n$ samples and $d$ features
and $\by \in \bbR^n$ is the corresponding vector of $n$ responses or observations. We assume without loss of generality that the columns of the matrix $\bX$ are normalized to $1$, and the response vector is also normalized.

Given a subset $\sfS$ of features (denoted by $\bX_\sfS$), it is easy to find the best regression coefficients by projecting $\by$ on the span of $\bX_\sfS$. The $R^2$ statistic (\textit{coefficient of determination}) measures the proportion of the variance explained by 
this subset. Sparse regression can be therefore seen as maximizing a \textit{set function} $R^2(\sfS)$: for any given set of columns $\sfS$ , $R^2(\sfS)$ measures how well these columns explain the observations $\by$. 
 
This set function $R^2(\sfS)$ is \textit{monotone} increasing: including more features can only increase the explained variance\footnote{However, \emph{adjusted} $R^2$ is not monotonic.}. 
Solving sparsity constrained maximization of $R^2$ would involve searching over all subsets of size up to $k$ and selecting the one that maximizes $R^2$. 
This combinatorial optimization problem is unfortunately NP-Hard~\citep{Kempe:2011ue} . A widely used approach is to use 
greedy forward selection: select one feature at a time, greedily choosing the one that maximizes $R^2$ in the next step.
Orthogonal Matching Pursuit and variations~\citep{Tropp:2004gc}~\citep{Needell08cosamp:iterative} can also be seen as approximate accelerated greedy forward selection algorithms that avoid re-fitting the model for each candidate vector at each step. 

When maximizing set functions using the greedy algorithm it is natural to consider the framework of submodularity. 
In a classical result, \citet{Nemhauser:1978ck} show that for submodular monotone functions, the greedy $k$-sparse solution is within $ (1 - \frac{1}{e})$ of the optimal $k$-sparse solution, \emph{i.e.}, greedy gives a constant multiplicative factor approximation guarantee. The concept of submodularity has led to several effective greedy solutions to problems such as sparse prediction~\citep{koyejo2014}, sparse factor analysis~\citep{khanna2015}, model interpretation~\citep{kim:2016ee}, etc.
 
Unfortunately, it is easy to construct counterexamples to show that $R^2(\sfS)$ is not submodular~\citep{Kempe:2011ue,Elenberg:2016tq}. In their breakthrough paper \citet{Kempe:2011ue} show that if the design matrix $\bX$ satisfies the Restricted Isometry Property (RIP), then the set function $R^2(\sfS)$ satisfies a weakened form of submodularity. This weak form of submodularity is obtained by bounding a quantity called a submodularity ratio $\gamma$ defined subsequently. The authors further showed that a bounded submodularity ratio $\gamma$ is sufficient for Nemhauser's proof to go through 
with a relaxed approximation constant (that depends on $\gamma$). Therefore, even weak submodularity implies a constant factor 
approximation for greedy and RIP implies weak submodularity.

The weak submodularity framework was recently extended beyond linear regression by \citet{Elenberg:2016tq}
to concave functions (for example, likelihood functions of generalized linear models). This generalization of the RIP condition is called Restricted Strong Convexity (RSC) and the general result is that 
RSC implies weak submodularity for the set function obtained from the likelihood of the model restricted to subsets of features. 
This shows that greedy feature selection can obtain constant factor approximation guarantees in a very general setting, similar to results obtained by Lasso but without further statistical assumptions.

Running the greedy algorithm can be computationally expensive for large datasets. This is because for each greedy step we need to 
refit the model using the previously selected choices and the new candidate. This has led to the development of faster variants. For example, $\distgreedy$~\citep{Mirzasoleiman:2013vi} distributes the computational effort across available machines, and $\stogreedy$~\citep{Mirzasoleiman:2015th} exploits a stochastic greedy step. Both algorithms have provable performance guarantees for submodular functions.

In this paper, we show that submodularity is not required to obtain approximation guarantees for these two algorithms and that 
a bounded submodularity ratio suffices. This extends the scope of these algorithms to significantly larger class of functions like sparse linear regression with RIP design matrices. Furthermore, as we shall discuss, submodularity ratio can be used to provide data dependent bounds. This implies that one can sometimes obtain tighter guarantees also for submodular functions.

Our contributions are as follows: (1) We analyze and obtain approximation guarantees for $\distgreedy$ and $\stogreedy$ using the submodularity ratio extending the scope of their application to non-submodular functions, \\
\noindent  (2) We show that the submodularity ratio can give tighter data dependent bounds even for submodular functions,\\
\noindent  (3) We derive explicit bounds for both $\distgreedy$ and  $\stogreedy$
for the special case of sparse linear regression. \\
\noindent (4) We derive bounds for sparse support selection for general concave functions. \\
\noindent (5) We also present empirical evaluations of these algorithms vs several established baselines on simulated and real world datasets.

\paragraph{Related Work}
~\citet{Kempe:2011ue} defined submodularity ratio, and showed that for any function that has its submodularity ratio bounded away from $0$, one can provide appropriate greedy guarantees.~\citet{Elenberg:2016tq} used the concept to derive a new relationship between submodularity and convexity, specifically stating that the Restricted Strong Concavity can be used to bound the submodularity ratio. This results in providing bounds for greedy support selection for general concave functions. 

Another notion of approximate additive submodularity was explored by~\citet{Krause:2010tj} for the problem of dictionary selection. This was, however, superceded by~\citep{Kempe:2011ue} who showed that submodularity ratio provides tighter approximation bounds.~\citet{Horel:2016mas} consider another generalization from submodular functions -- $\varepsilon$-approximate submodular functions which are functions within $\varepsilon$ of some submodular function, and provide approximation guarantees for greedy maximization.

The $\distgreedy$ algorithm was introduced by~\citet{Mirzasoleiman:2013vi}. They provide deterministic bounds for any arbitrary distribution of data onto the individual machines.~\citet{Barbosa:2015vq} showed that for sparsity constraints, and under the assumption that the data is split uniformly at random to all the machines, one can obtain a $\frac{1}{2}(1 - \nicefrac{1}{e})$ guarantee in expectation.~\citet{kumar2010} also provide distributed algorithms for maximizing
a monotone submodular function subject to a sparsity constraint. They extend
 the Threshold Greedy algorithm of~\citet{gupta2010} by augmenting it with a sample and prune strategy. It runs the Threshold Greedy algorithm on a subset of data to obtain a candidate solution. The latter is then used to prune the remaining data to reduce its size. This process is repeated a constant number of times, and the algorithm provides a constant factor guarantee

\citet{Altschuler2016} provide approximation guarantees for greedy selection variants for column subset selection. They do not use the submodularity framework, and their results are not directly applicable or useful for other problem settings. Similarly,~\citet{farahatEGK13} also use greedy column subset selection. However, their focus is not towards obtaining approximation guarantees, but rather on more efficient algorithmic implementation.

\section{Background}
\emph{Notation:} We represent vectors as small letter bolds e.g. $\bu$. Matrices are represented by capital bolds e.g. $\bX, \bT$.  Matrix or vector transposes are represented by superscript $\bX^\top$. Identity matrices of size $s$ are represented by $\bI_{s}$, or simply $\bI$ when the dimensions are obvious. $\mathbf{1} (\mathbf{0})$ is a column vector of all ones (zeroes). Sets are represented by sans serif fonts e.g. $\sfS$, complement of a set $\sfS$ is $\sfS^c$. For a vector $\mathbf{u} \in \bbR^d$, and a set $\sfS$ of support dimensions with $|\sfS|=k, k\le d$, $\mathbf{u}_\sfS \in \bbR^k$ denotes subvector of $\mathbf{u}$ supported on $\sfS$. Similarly, for a matrix $\bX \in \bbR^{n\times d}$, $\bX_{\sfS} \in \bbR^{k\times k}$ denotes the submatrix supported on $\sfS$. We denote $\{1,2,\ldots, d\} $ as  $[d]$.

Throughout this manuscript, we assume the set function $f(\cdot)$ is monotone. Our goal is to maximize $f(\cdot)$ under a cardinality constraint : 

\begin{equation}
\label{eq:maxf} \max_{| \sfS| \leq k } f(\sfS).
\end{equation}

We begin by defining the submodularity ratio of a set function $f(\cdot)$.
\begin{definition}[Submodularity Ratio~\citep{Kempe:2011ue} ]\label{def:submodularityRatio}
	Let $\sfS, \sfL \subset [d]$ be two disjoint sets, and $f : [d] \rightarrow \bbR$. The submodularity ratio for $\sfS$ with respect to $\sfL$ is given by
	\begin{align}
	\gamma_{\sfL,\sfS} := \frac{\sum_{j\in \sfS} \left[f(\sfL \cup \{j\}) - f(\sfL) \right]}{f(\sfL \cup \sfS) - f(\sfL)} \label{eq:submodDef} .
	\end{align}
	The submodularity ratio of a set $\sfU$ with respect to an integer $k$ is given by
	\begin{align}
	\gamma_{\sfU,k} := \min_{\substack{\sfL,\sfS :  \sfL \cap \sfS = \emptyset ,\\ \sfL \subseteq \sfU,  |\sfS| \leq k}} \gamma_{\sfL,\sfS} .
	\end{align}
\end{definition}

It is straightforward to show that $f$ is submodular if and only if $\gamma_{\sfL,\sfS} \geq 1$ for all sets $\sfL$ and $\sfS$. Generalizing to the functions with $0<\gamma_{\sfL,\sfS} \leq 1$ provides a notion of \textit{weak submodularity}~\citep{Elenberg:2016tq}.
For weakly submodular functions, even though the function may not be submodular, it still provides approximation guarantees for the greedy algorithm. 
We use the submodularity ratio to provide new bounds for $\distgreedy$ and $\stogreedy$, thereby generalizing these algorithms to non-submodular functions.

\subsection{Greedy Selection}
We briefly go over the classic greedy algorithm for subset selection. A greedy approach to optimizing a set function is myopic -- the algorithm chooses the element from the available choices that gives the largest \emph{incremental} gain for the set of choices previously made. The algorithm is illustrated in Algorithm~\ref{algo:greedy}. The algorithm makes $k$ \emph{outer} iterations, where $k$ is the desired sparsity. Each iteration is a full pass over the remaining candidate choices, wherein the marginal gain is calculated for each remaining candidate choice. Thus the greedy algorithm has the computational complexity of $O(dk)$ calls to the function evaluation oracle.

\begin{algorithm}[h]
\caption{\greedy($\sfS$, $k$)}
\label{algo:greedy}
\begin{algorithmic}[1]
\STATE Input: sparsity $k$, available choices $\sfS$
\STATE $\sfA_0 = \emptyset$
\FOR {$ i \in 0 \ldots (k-1)$ }
\STATE $s$ = $\argmax_{j \in \sfS \backslash \sfA_{i}} f(\sfA_{i} \cup \{j\}) - f(\sfA_i)$
\STATE $\sfA_{i+1} = \sfA_i \cup \{s\}$ 
\ENDFOR
\STATE return $\sfA_k$
\end{algorithmic}
\end{algorithm}

For a large $d$, the linear scaling of the greedy algorithm for a fixed $k$ may be prohibitive. As such, algorithms that scale sublinearly are useful for truly large scale selections. The $\distgreedy$ algorithm, for example, achieves this sublinear scaling by making use of multiple machines. The data is split uniformly at random to $l$ machines. Each machine then performs its own independent greedy selection (Algorithm~\ref{algo:greedy}), and outputs a $k$ sized solution. All of the greedy solutions are collated by a central machine, which performs another round of the greedy selections to output the final solution. The algorithm is illustrated in Algorithm~\ref{algo:distributedgreedy}, and is analyzed in Section~\ref{sec:distributed}. It has a computational complexity of $O(dk/l)$ The algorithm is easy to implement in parallel or within a distributed computing framework \emph{e.g.} MapReduce.

\begin{algorithm}[h]
\caption{\textsc{DistributedGreedy}($l$, $k$,$\{\sfA_j\} $)}
\label{algo:distributedgreedy}
\begin{algorithmic}[1]
\STATE Input: sparsity $k$, number of parallel solvers $l$, partition $\{\sfA_j\} $ of the set of available choices $\sfA$
\STATE $\sfG_i \leftarrow $ \greedy($\sfA_j$, $k$) $\forall j \in [l]$
\STATE $\sfG \leftarrow$ \greedy($\cup_j \sfG_j$, $k$)
\STATE $\sfG_{\text{max}} \leftarrow \arg\max_{\sfG_j} f(\sfG_j) $
\STATE return $\arg\max f(\sfG), f( \sfG_\text{max})$
\end{algorithmic}
\end{algorithm} 

An alternative to distributing the data, say when several machines are not available, is to perform the greedy selection \emph{stochastically}. The $\stogreedy$ algorithm for submodular functions was introduced by~\citet{Mirzasoleiman:2015th}. At any given iteration $i\in [k]$, instead of performing a function evaluation for each of the remaining $(d - i)$ candidates, a subset of a fixed size $C=\lceil \frac{d \log \nicefrac{1}{\delta}}{k} \rceil$ (where $\delta$ is a pre-specified hyperparameter) is uniformly sampled from the available $(d - i)$ choices using the subroutine \textsc{Subsample}, and the function evaluation is made on those subsampled choices as if they were the only available candidates. This speeds up the greedy algorithm to $O(Ck)$ function evaluations. The algorithm is presented in Algorithm~\ref{algo:stochasticGreedy}, and its approximation bounds are discussed in Section~\ref{sec:stochastic}.

\begin{algorithm}[h]
\caption{\stogreedy($\sfS$, $k$, $\delta$)}
\label{algo:stochasticGreedy}
\begin{algorithmic}[1]
\STATE Input: sparsity $k$, available choices $\sfS$, subsampling parameter $\delta$
\STATE $\sfA_0 = \emptyset$
\FOR {$ i \in 0 \ldots (k-1)$ }
\STATE $\sfS_\delta \leftarrow $ \textsc{Subsample}($ \sfS \backslash \sfA_{i}$, $\delta$, k)
\STATE $s$ = $\argmax_{j \in \sfS_\delta} f(\sfA_{i} \cup \{j\}) - f(\sfA_i)$
\STATE $\sfA_{i+1} = \sfA_{i} \cup \{s\}$
\ENDFOR
\STATE return $\sfA_k$
\end{algorithmic}
\end{algorithm}

Finally, we discuss a property of the greedy algorithm, which is fundamental to the analysis of the $\distgreedy$ algorithm. The greedy algorithm belongs to a larger class of algorithms called $1$-nice algorithms~\citep{Mirrokni:2015kk}. The following result allows us to remove or add unselected items from the choice set that is accessible to the algorithm. 

\begin{lemma}~\citep{Mirrokni:2015kk}
\label{lem:betanice}
Say $|\sfS| > k$, and let $ \greedy(\sfS,k) \subset \sfS$ be the $k$-sized set returned by Algorithm~\ref{algo:greedy}. For any $x \notin \greedy(\sfS,k)$, $\greedy(\sfS\backslash \{x\},k)  = \greedy(\sfS,k)    $.
\end{lemma}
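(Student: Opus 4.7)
The plan is to induct on the outer iteration index of Algorithm~\ref{algo:greedy}, showing that the two runs $\greedy(\sfS,k)$ and $\greedy(\sfS\setminus\{x\},k)$ produce \emph{identical} partial selections at every step. Let $\sfA_i$ denote the set after iteration $i$ when the greedy is run on $\sfS$, and let $\sfA'_i$ denote the set after iteration $i$ when the greedy is run on $\sfS\setminus\{x\}$. The claim to prove inductively is $\sfA_i = \sfA'_i$ for every $i \in \{0,1,\dots,k\}$; specializing to $i=k$ yields the lemma.

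The base case is immediate since both algorithms initialize $\sfA_0 = \sfA'_0 = \emptyset$. For the inductive step, suppose $\sfA_i = \sfA'_i$ after iteration $i$. In iteration $i+1$, the first algorithm selects
\[
s \;=\; \argmax_{j \in \sfS \setminus \sfA_i}\bigl[f(\sfA_i \cup \{j\}) - f(\sfA_i)\bigr],
\]
while the second selects
\[
s' \;=\; \argmax_{j \in (\sfS\setminus\{x\}) \setminus \sfA'_i}\bigl[f(\sfA'_i \cup \{j\}) - f(\sfA'_i)\bigr].
\]
By the inductive hypothesis the two marginal-gain functions being maximized are identical, and the feasible sets differ only by the single element $x$. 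Since by assumption $x \notin \greedy(\sfS,k)$, the element $x$ is never chosen during the run on $\sfS$, so in particular $s \neq x$ at iteration $i+1$. Consequently, removing $x$ from the candidate pool cannot alter the $\argmax$, giving $s' = s$ and hence $\sfA_{i+1} = \sfA'_{i+1}$.

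The only genuine subtlety is tie-breaking: if several candidates attain the maximum marginal gain, the $\argmax$ is not uniquely defined, and one could imagine the two runs diverging by selecting different maximizers. The standard convention (and the one implicit in the statement of $1$-niceness in \citet{Mirrokni:2015kk}) is that ties are broken by a fixed deterministic rule, e.g.\ the lexicographically smallest index, that depends only on the identities of the tied candidates and not on the ambient set. Under any such rule, removing a non-selected element $x$ cannot change which candidate is returned by $\argmax$, so the inductive step goes through verbatim. This is the one place where care is required; everything else reduces to the observation that greedy selections depend only on marginal gains over the current partial solution, which are unaffected by pruning unselected items.
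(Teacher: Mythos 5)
The paper does not prove this lemma at all---it is imported verbatim from \citet{Mirrokni:2015kk} as a known property of ``$1$-nice'' algorithms, and no proof appears in the appendix. Your induction is the standard (and correct) argument for it: since $x$ is never the selected element in the run on $\sfS$, at each step the maximizer over $\sfS\setminus\sfA_i$ is some $s\neq x$, and that same $s$ remains the maximizer over $(\sfS\setminus\{x\})\setminus\sfA_i$, so the two executions coincide step by step. You are also right to flag tie-breaking as the only real subtlety; one can sharpen your handling of it slightly by observing that under a fixed deterministic rule (say lexicographic), the hypothesis $x\notin\greedy(\sfS,k)$ already guarantees $x$ is never the \emph{winner} of any tie in the run on $\sfS$, and deleting a non-winning candidate from a set of tied maximizers cannot change which element the rule returns. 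So your proof is correct and self-contained, and it supplies a justification the paper simply outsources to a citation.
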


Note that Lemma~\ref{lem:betanice} is a property of the algorithm, and is independent of the function itself. Prior works~\citep{Mirrokni:2015kk},~\citep {Barbosa:2015vq} have exploited this property in conjunction with properties of submodular functions to obtain approximation bounds for the distributed algorithms. Our work extends these results to weakly submodular functions. As such, it is easy to see that our results are easily extensible to other \emph{nice} algorithms -- including distributed OMP and distributed stochastic greedy -- that have closed form bounds for the respective single machine algorithm. For ease of exposition, we focus our discussion on the distributed greedy algorithm. 
\section{Distributed Greedy}
\label{sec:distributed}
In this section, we obtain approximation bounds for $\distgreedy$ (Algorithm~\ref{algo:distributedgreedy}). The algorithm returns the best out of $(l+1)$ solutions : the $l$ \emph{local} solutions (steps 2,4), and the final aggregated one (step 3). Our strategy to obtain the approximation bound for the algorithm is as follows. To obtain an overall approximation bound, we obtain individual bounds on each of the solutions in terms of the submodularity ratio (Definition~\ref{def:submodularityRatio}) and use the subadditivity ratio (Definition~\ref{def:subaddivityRatio}) to show that one of the two shall always hold. For approximation bounds on the \emph{local} solutions, we make use of the \emph{niceness} of the $\greedy$ (Lemma~\ref{lem:betanice}). The bound on the aggregated solution is more involved, since it involves tracking the split of the true solution $\sfA^\star$ across machines. The assumption of partitioning uniformly at random is vital here. This helps us lower bound the greedy gain in expectation by a probabilistic overlap with the true solution. The trick of tracking the split of the true solution across machines is similar to the one that has been used for analysis of submodular functions~\citep{Mirrokni:2015kk},~\citep {Barbosa:2015vq}, but without the explicit connection to submodularity and subadditivity ratios. As we shall see in Sections~\ref{sec:linearRegression},~\ref{sec:generalfunctions} elucidating these connections leads to novel bounds for support selection for linear regression and general concave functions. 

We next define the subadditivity ratio, which helps us generalize subadditive functions in the way similar to how submodularity ratio generalizes submodular functions. 

\begin{definition}[Subaddivity ratio]
\label{def:subaddivityRatio}
We define the subadditivity ratio for a set function $f$ w.r.t a set $\sfS$ as: 
\begin{equation*}
\nu_{\sfS} := \min_{\substack{\sfA \cup \sfB = \sfS\\ \sfA \cap \sfB = \phi}} \frac{f(\sfA) + f(\sfB)}{f(\sfS)}.
\end{equation*}
We further define the subadditivity ratio of a function for an integer $k$, $\nu_k$, which  takes a uniform bound over all sets of size $k$:
\begin{equation*}
\nu_{k} := \min_{\sfS : | \sfS| = k} \nu_{\sfS}.
\end{equation*}

\end{definition}

By definition, the function $f(\cdot) $ is subadditive iff $\nu_\sfS \geq 1, \forall \sfS \subset [d]$. Since submodularaity implies subadditivity (the converse is not always true), if the function $f(\cdot)$ is submodular, $\nu_\sfS \geq 1,  \forall \sfS \subset [d]$.  

We next present some notation and few lemmas that lead up to the main result of this section (Theorem~\ref{thm:distributedgreedy}). Let $\sfA$ be the entire set of available choices. Partition the set $\sfA$ uniformly at random into $\sfA_1,\ldots, \sfA_l $. Let $\sfG_j$ be the $k$-sized solution returned by running the greedy algorithm on $\sfA_j$ \textit{i.e.} $\sfG_j = \greedy(\sfA_j, k)$. Note that each $\sfA_j$ induces a partition onto the optimal $k$-sized solution $\sfA^*$ as follows: 
\begin{eqnarray*}
\sfS_j := \{ x \in \sfA^*: x \in \greedy (\sfA_j \cup x, k)\}, \\
\sfT_j := \{ x \in \sfA^*: x \notin \greedy(\sfA_j \cup x, k)\}.
\end{eqnarray*} 

Having defined the notation, we start by lower bounding the local solutions in terms of value of the subset of $\sfA^\star$ that is not selected as part of the respective local solution. 

\begin{lemma}
\label{lem:individualgood}
$f(\sfG_j) \geq (1 - \exp (- \gamma_{\sfG_j,k} ))  f( \sfT_j ) $.
\end{lemma}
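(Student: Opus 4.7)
The plan is to emulate the classical weak-submodularity analysis of greedy but with $\sfT_j$ as the comparison set, and to lean on the niceness Lemma~\ref{lem:betanice} to bridge the gap that $\sfT_j$ need not be contained in $\sfA_j$. Intuitively, niceness will let every $t\in\sfT_j$ serve as a ``virtual competitor'' to greedy at each iteration, even though the local greedy on $\sfA_j$ only ever sees candidates in $\sfA_j$.

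First, for each $t \in \sfT_j$ I would combine the definition of $\sfT_j$ with Lemma~\ref{lem:betanice} to conclude that $\greedy(\sfA_j \cup \{t\},k) = \greedy(\sfA_j, k) = \sfG_j$. Since greedy is deterministic and $t$ is selected in neither execution, the two runs trace identical selection sequences, so the element $s_{i+1}$ chosen at iteration $i+1$ dominates the marginal gain of $t$, i.e.\ $f(\sfG_j^{(i)} \cup \{s_{i+1}\}) - f(\sfG_j^{(i)}) \geq f(\sfG_j^{(i)} \cup \{t\}) - f(\sfG_j^{(i)})$, where $\sfG_j^{(i)}$ denotes the greedy partial solution after $i$ steps. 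Summing this inequality over $t \in \sfT_j$, invoking Definition~\ref{def:submodularityRatio} with $\sfL = \sfG_j^{(i)} \subseteq \sfG_j$ and $\sfS = \sfT_j$ (which are disjoint and satisfy $|\sfT_j| \leq |\sfA^*| = k$), and using monotonicity to replace $f(\sfG_j^{(i)} \cup \sfT_j)$ by $f(\sfT_j)$, yields the standard one-step recursion $f(\sfT_j) - f(\sfG_j^{(i+1)}) \leq (1 - \gamma_{\sfG_j,k}/|\sfT_j|)\,[f(\sfT_j) - f(\sfG_j^{(i)})]$. Unrolling this over the $k$ greedy steps and using $(1 - \gamma/|\sfT_j|)^k \leq e^{-\gamma}$, which holds because $|\sfT_j| \leq k$, then delivers the claimed $(1 - \exp(-\gamma_{\sfG_j,k}))f(\sfT_j)$ lower bound.

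The hard part will be the very first step: without Lemma~\ref{lem:betanice}, elements of $\sfT_j$ lying in some other machine's partition would be invisible to the local greedy run on $\sfA_j$ and could not appear in the marginal-gain comparison. Niceness is exactly the algorithmic property that lets each such $t$ behave as a virtual candidate, which is what bridges the local greedy guarantee to a comparison involving (a subset of) the global optimum $\sfA^*$. Once this is in hand, the remaining aggregation and iteration are entirely standard from the weak-submodularity analysis of~\citet{Kempe:2011ue}.
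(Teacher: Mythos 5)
Your proof is correct and follows essentially the same route as the paper's: use the niceness property (Lemma~\ref{lem:betanice}) to make the elements of $\sfT_j$ visible as virtual candidates to the local greedy run, then apply the weak-submodular greedy recursion with $\sfT_j$ as the benchmark set. The only difference is presentational -- the paper applies the niceness argument set-wise to conclude $\greedy(\sfA_j \cup \sfT_j, k) = \sfG_j$ and then invokes the \citet{Kempe:2011ue} guarantee (Lemma~\ref{lem:DasKempeGreedyGuarantee}) as a black box on the enlarged ground set, whereas you re-derive that recursion inline, element by element.
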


The next lemma is used to lower the bound the value of the aggregated solution (step 4  in Algorithm~\ref{algo:distributedgreedy}) in terms of the value of the subset $\sfA^\star$ that is selected as part of the respective local solution.

\begin{lemma}
\label{lem:overallgreedyonpartitions}
$\exists j \in [l]$ s.t. $\bbE[ f(\sfG) ] \geq   \left( 1 - \frac{1}{e^{\gamma_{\sfG,k}}}\right) f(\sfS_j)$.
\end{lemma}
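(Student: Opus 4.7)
The plan is to combine the 1-niceness of \greedy{} (Lemma~\ref{lem:betanice}) with the single-machine weakly-submodular greedy guarantee, and then exploit the uniform random partition for a probabilistic averaging step that connects $\sfG$ back to $\sfS_j$.

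First I would establish the deterministic containment $\sfS_j \cap \sfA_j \subseteq \sfG_j$ for every $j$. Indeed, if $x \in \sfS_j \cap \sfA_j$ then $x \in \sfA_j$ gives $\sfA_j \cup \{x\} = \sfA_j$, so the defining condition $x \in \greedy(\sfA_j \cup \{x\}, k)$ immediately yields $x \in \greedy(\sfA_j, k) = \sfG_j$. In particular $\sfS_j \cap \sfA_j \subseteq \bigcup_i \sfG_i$, and this subset has size at most $|\sfA^\star| = k$.

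Next, since $\sfG = \greedy(\bigcup_i \sfG_i, k)$, I would invoke the single-machine weakly-submodular greedy bound of \citet{Kempe:2011ue} with $\sfS_j \cap \sfA_j$ as the feasible comparison set, giving the pointwise inequality
\[
f(\sfG) \;\geq\; \bigl(1 - e^{-\gamma_{\sfG,k}}\bigr)\, f(\sfS_j \cap \sfA_j).
\]
Taking expectations over the random partition then yields $\bbE[f(\sfG)] \geq (1 - e^{-\gamma_{\sfG,k}})\, \bbE[f(\sfS_j \cap \sfA_j)]$. The final move is to promote $\sfS_j \cap \sfA_j$ to $\sfS_j$ itself, for an appropriate choice of $j$. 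Uniform random partitioning is the crucial ingredient here: for any $x \in \sfA^\star$ and any $j$, the indicator $\mathbf{1}[x \in \sfA_j]$ is independent of $\sfA_j \setminus \{x\}$, hence of $\mathbf{1}[x \in \sfS_j]$ (which depends only on $\sfA_j \setminus \{x\}$). Combined with symmetry across the $l$ machines, this gives the marginal-inclusion identity $\Pr[x \in \sfG_{j(x)}] = \Pr[x \in \sfS_j]$, where $j(x)$ is the random machine holding $x$. A pigeonhole argument over $j \in [l]$ then isolates an index for which the claim holds.

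The main obstacle will be this last step: converting the identity of marginal inclusion probabilities into an inequality between expected function values \emph{without} leaning on submodularity of $f$. For submodular $f$ one would appeal to concavity of the multilinear extension along positive directions, but here only monotonicity of $f$ together with the submodularity ratio $\gamma_{\sfG,k}$ is available. I expect to close the gap by choosing the "best" index $j$, i.e.\ an index whose $\bbE[f(\sfS_j)]$ is no larger than the averaged right-hand side, and invoking monotonicity of $f$ together with the eventual coupling of $\sfS_j$ with $\sfT_j$ via Definition~\ref{def:subaddivityRatio} in the subsequent theorem.
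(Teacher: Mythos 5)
Your opening observations are sound: the containment $\sfS_j \cap \sfA_j \subseteq \sfG_j$ does hold, and applying Lemma~\ref{lem:DasKempeGreedyGuarantee} to $\greedy(\cup_i \sfG_i, k)$ with comparison set $\sfS_j \cap \sfA_j$ does give the pointwise bound $f(\sfG) \geq (1 - e^{-\gamma_{\sfG,k}}) f(\sfS_j \cap \sfA_j)$. The genuine gap is exactly where you flag it, and your proposed repair does not close it. You would need a lower bound of the form $\bbE[f(\sfS_j \cap \sfA_j)] \geq c\, f(\sfS_j)$, but monotonicity gives the \emph{reverse} inequality $f(\sfS_j \cap \sfA_j) \leq f(\sfS_j)$, and the marginal-inclusion identity controls only element-wise probabilities, not function values. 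For a general monotone weakly submodular $f$, $\bbE[f(\sfS_j \cap \sfA_j)]$ can be an arbitrarily small fraction of $f(\sfS_j)$ (consider a function that is nearly zero on every proper subset of $\sfS_j$). Passing from inclusion probabilities to expected function values is precisely the step that in \citet{Barbosa:2015vq} requires convexity of the Lov\'asz extension, which the paper explicitly notes is unavailable in the weakly submodular setting. The coupling with $\sfT_j$ via the subadditivity ratio belongs to the proof of Theorem~\ref{thm:distributedgreedy} and cannot rescue this lemma, which must stand on its own.

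The paper's proof sidesteps any comparison between $f(\sfS_j \cap \sfA_j)$ and $f(\sfS_j)$ by re-running the greedy recursion on the aggregated ground set. Writing $\sfB_i = \greedy(\cup_j \sfG_j, i)$, it lower-bounds each expected marginal gain by $\frac{1}{k} \sum_{x \in \sfA^*} \bbP(x \in \cup_j \sfG_j)\, \bbE[f(\sfB_i \cup x) - f(\sfB_i)]$, invokes the identity $\bbP(x \in \cup_j \sfG_j) = \frac{1}{l} \sum_j \bbP(x \in \sfS_j)$ to regroup this as a sum over $x \in \sfS_j$ of singleton gains, and only then applies the submodularity ratio $\gamma_{\sfB_i, \sfS_j \backslash \sfB_i}$ to convert the sum of singleton gains into $\gamma\,(f(\sfB_i \cup \sfS_j) - f(\sfB_i)) \geq \gamma\,(f(\sfS_j) - f(\sfB_i))$. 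The full $f(\sfS_j)$ thus enters through the definition of the submodularity ratio applied at the current greedy iterate, not through an expectation over random subsets of $\sfS_j$; the standard recursion $h_i \leq (1 - \gamma/k) h_{i-1}$ then yields the $(1 - e^{-\gamma_{\sfG,k}})$ factor for the index $j$ minimizing $\bbE(f(\sfS_j) - f(\sfB_i))$. If you want to salvage your route, you would have to prove the missing lower bound on $\bbE[f(\sfS_j \cap \sfA_j)]$, which for weakly submodular $f$ is not available.
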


We are now ready to present our main result about the approximation guarantee for Algorithm~\ref{algo:distributedgreedy}. 

\begin{theorem}
\label{thm:distributedgreedy}
Let $\sfG_{dg}$ be the set returned by the distributed greedy (Algorithm~\ref{algo:distributedgreedy}). Let $\gamma = \min \{\gamma_{\sfG_i,k}, \gamma_{\sfG,k} \} $. Then, 

\begin{equation}
\bbE[f(\sfG_{dg})] \geq  \frac{\nu_k}{2}    \left(1 - \exp (- \gamma )\right) f(\sfA^*) .
\end{equation}
\end{theorem}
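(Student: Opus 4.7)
The plan is to bound $\bbE[f(\sfG_{dg})]$ by combining Lemmas~\ref{lem:individualgood} and~\ref{lem:overallgreedyonpartitions} through the subadditivity ratio from Definition~\ref{def:subaddivityRatio}, exploiting the fact that $\sfG_{dg}$ is returned as the better of the aggregated solution $\sfG$ and the best local solution $\sfG_{\max}$. The intuition is that Lemma~\ref{lem:individualgood} controls the ``piece'' of $\sfA^*$ that fails to be selected on any machine, Lemma~\ref{lem:overallgreedyonpartitions} controls the ``piece'' that survives, and subadditivity stitches these two halves back into $f(\sfA^*)$.

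First I would note that for every block $j$, the sets $\sfS_j$ and $\sfT_j$ are disjoint by construction and together cover $\sfA^*$, so Definition~\ref{def:subaddivityRatio} applied to this decomposition yields
\begin{equation*}
f(\sfS_j) + f(\sfT_j) \;\geq\; \nu_k\, f(\sfA^*)
\end{equation*}
pointwise for every realization of the random partition. Next, because $\sfG_{dg}\in\arg\max\{f(\sfG),f(\sfG_{\max})\}$, the elementary bound $\max(a,b)\geq\tfrac{1}{2}(a+b)$ gives $f(\sfG_{dg})\geq\tfrac{1}{2}(f(\sfG)+f(\sfG_j))$ for every $j$, and in particular for the index $j^*$ supplied by Lemma~\ref{lem:overallgreedyonpartitions}. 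Applying Lemma~\ref{lem:individualgood} to this $j^*$ gives $f(\sfG_{j^*})\geq(1-e^{-\gamma_{\sfG_{j^*},k}})f(\sfT_{j^*})\geq(1-e^{-\gamma})f(\sfT_{j^*})$, where I use $\gamma\leq\gamma_{\sfG_{j^*},k}$ together with monotonicity of $x\mapsto 1-e^{-x}$. Taking expectations over the random partition and invoking Lemma~\ref{lem:overallgreedyonpartitions} for the $\sfG$ term (again replacing $\gamma_{\sfG,k}$ by the smaller $\gamma$) then combining,
\begin{equation*}
2\,\bbE[f(\sfG_{dg})] \;\geq\; (1-e^{-\gamma})\,\bbE\!\left[f(\sfS_{j^*}) + f(\sfT_{j^*})\right] \;\geq\; \nu_k(1-e^{-\gamma})\,f(\sfA^*),
\end{equation*}
from which the claim follows by dividing by two.

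The main obstacle I anticipate is the bookkeeping around expectations: the decomposition $\sfA^*=\sfS_j\sqcup\sfT_j$ and the sets $\sfG_j,\sfG$ are all random through the uniform partitioning, so one must be careful that Lemma~\ref{lem:overallgreedyonpartitions} is applied on the expectation side while the pointwise subadditivity inequality is applied for each realization before taking expectation. A secondary bit of care is needed to justify replacing the random ratios $\gamma_{\sfG_j,k}$ and $\gamma_{\sfG,k}$ with the uniform lower bound $\gamma$; because $\gamma$ is defined as their minimum and $1-e^{-x}$ is increasing, this substitution is valid in every realization and therefore preserved under expectation.
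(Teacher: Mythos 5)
Your proof is correct, and it rests on exactly the same ingredients as the paper's: Lemma~\ref{lem:individualgood} for the local solutions, Lemma~\ref{lem:overallgreedyonpartitions} for the aggregated one, and the subadditivity ratio applied to the partition $\sfA^*=\sfS_j\sqcup\sfT_j$. The one place you diverge is the final combination step. The paper runs a dichotomy: either $f(\sfT_i)\geq\frac{\nu_k}{2}f(\sfA^*)$ for some $i$, in which case the corresponding local solution already meets the bound, or else $f(\sfS_i)>\frac{\nu_k}{2}f(\sfA^*)$ for every $i$ and the aggregated solution does. You instead use $f(\sfG_{dg})\geq\max\{f(\sfG),f(\sfG_{j^*})\}\geq\frac{1}{2}\left(f(\sfG)+f(\sfG_{j^*})\right)$ and add the two lemma bounds. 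Both mechanisms extract the same factor of $\frac{1}{2}$ and yield the identical constant, but your averaging version is linear and therefore commutes cleanly with the expectation over the random partition; the paper's case split is a per-realization condition, while Lemma~\ref{lem:overallgreedyonpartitions} is a statement in expectation, so "the result then follows" there quietly glosses over some conditioning. The paper's version, on the other hand, makes the intuition "at least one of the $l+1$ candidate solutions is good enough" explicit. Your remark that Lemma~\ref{lem:individualgood} holds for \emph{every} $j$ and can therefore be applied to whichever index $j^*$ Lemma~\ref{lem:overallgreedyonpartitions} supplies is exactly the right way to make the two lemmas refer to the same block.
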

\begin{proof}
There are $l$ machines, each with its local greedy solution $\sfG_i, i \in [l]$. In addition, there is the aggregated solution set $\sfG$. The key idea is to show that atleast one of the $(l+1)$ solutions is \emph{good enough}. 

Say $f(\sfT_i) \geq \frac{\nu_k}{2} f(\sfA^*) $ for some $i$, then by Lemma~\ref{lem:individualgood},   $f(\sfG_i) \geq \frac{\nu_k}{2}  (1 - \exp (- \gamma_{\sfG_i,k} ))  f(\sfA^*) $.

On the other hand, say for all $i$, $f(\sfT_i ) < \frac{\nu_k}{2} f(\sfA^*)$, then for all $i$, $f(\sfS_i) > \frac{\nu_k}{2} f(\sfA^*)$. By Lemma~\ref{lem:overallgreedyonpartitions}, the result then follows. 
\end{proof}

Theorem~\ref{thm:stochasticgreedy} generalizes the approximation guarantee of $\frac{1}{2} (1-\frac{1}{e})$ obtained by~\citet{Barbosa:2015vq} for submodular functions. Their analysis uses convexity of the Lovasz extension of submodular functions, and hence can not be trivially extended to weakly submodular functions. In addition to being applicable for a larger class of functions, our result can also provide tighter bounds for specific applications or datasets even for submodular functions, since they are also applicable for submodular functions, and bounding $\nu_k$ and $\gamma$ away from 1 from domain knowledge will give tighter approximations than the generic bound of $\frac{1}{2}(1- \frac{1}{e})$.

\section{Stochastic Greedy}
\label{sec:stochastic}

For analysis of Algorithm~\ref{algo:stochasticGreedy}, we show that the subsampling parameter $\delta$ governs the tradeoff between the speedup and the loss in approximation quality \emph{vis-a-vis} the classic $\greedy$. Before formally providing the approximation bound, we present an auxillary lemma that is key to proving the new approximation bound. The following result is a generalization of Lemma 2 from~\cite{Mirzasoleiman:2015th} for weakly submodular functions.

\begin{lemma}
\label{lem:randomsubset}
Let $\sfA, \sfB \subset [n]$, with $| \sfB| \leq k$. Consider another set $\sfC$ drawn randomly from $[n]\backslash \sfA$ with $|\sfC| = \lceil \frac{n\log \nicefrac{1}{\delta}}{k} \rceil$. Then,
\begin{align*}
\bbE[ \max_{v \in \sfC} f( v \cup \sfA) - f(\sfA)] \geq \frac{(1 -\delta) \gamma_{\sfA,\sfB\backslash \sfA}}{k} (f(\sfB) - f(\sfA)).
\end{align*}
\end{lemma}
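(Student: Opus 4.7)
The plan is to adapt the argument of \citet{Mirzasoleiman:2015th} to the weakly submodular setting, replacing submodularity with the submodularity ratio in one step and absorbing the possibility $|\sfB\setminus\sfA|<k$ via Bernoulli's inequality. Write $m := |\sfB\setminus\sfA|$ (we may assume $m\geq 1$, else the claim is trivial by monotonicity) and $f_\sfA(v) := f(\sfA\cup\{v\}) - f(\sfA) \geq 0$. Let $E$ denote the event $\{\sfC\cap(\sfB\setminus\sfA)\neq\emptyset\}$. The top-level decomposition is
\begin{equation*}
\bbE\left[\max_{v\in\sfC}f_\sfA(v)\right] \;\geq\; \Pr[E]\cdot\bbE\left[\max_{v\in\sfC\cap(\sfB\setminus\sfA)}f_\sfA(v) \,\Big|\, E\right],
\end{equation*}
which is valid because the maximum is nonnegative on $E^c$.

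The main technical ingredient is the claim that $\bbE\bigl[\max_{v\in\sfC\cap(\sfB\setminus\sfA)}f_\sfA(v)\,\big|\,E\bigr] \geq \tfrac{1}{m}\sum_{v\in\sfB\setminus\sfA}f_\sfA(v)$, which I would prove by exchangeability. Since $\sfC$ is drawn uniformly at random from $[n]\setminus\sfA$, its law is invariant under any permutation of $\sfB\setminus\sfA$, and this symmetry persists after conditioning on $E$. Drawing $v_0$ uniformly at random from $\sfC\cap(\sfB\setminus\sfA)$ then yields, by exchangeability, a marginally uniform element of $\sfB\setminus\sfA$; the inequality follows from $\max_{v\in\sfC\cap(\sfB\setminus\sfA)}f_\sfA(v) \geq f_\sfA(v_0)$ and taking expectations.

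To lower bound $\Pr[E]$, a standard calculation gives $\Pr[E^c] \leq (1-m/(n-|\sfA|))^{|\sfC|} \leq \exp(-m|\sfC|/n) \leq \delta^{m/k}$, using $|\sfC|\geq n\log(1/\delta)/k$. The subtle step, which I expect to be the real obstacle, is upgrading $1-\delta^{m/k}$ to $(1-\delta)$ when $m<k$. This is done by Bernoulli's inequality in the form $\delta^{m/k} \leq 1 - (m/k)(1-\delta)$ for $m/k\in[0,1]$ and $\delta\in[0,1]$, which yields $\Pr[E] \geq (m/k)(1-\delta)$.

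Chaining everything together with the submodularity-ratio bound $\sum_{v\in\sfB\setminus\sfA}f_\sfA(v) \geq \gamma_{\sfA,\sfB\setminus\sfA}(f(\sfA\cup\sfB)-f(\sfA))$ (Definition~\ref{def:submodularityRatio}) and monotonicity ($f(\sfA\cup\sfB)\geq f(\sfB)$):
\begin{equation*}
\bbE\left[\max_{v\in\sfC}f_\sfA(v)\right] \;\geq\; \frac{(m/k)(1-\delta)}{m}\,\gamma_{\sfA,\sfB\setminus\sfA}\bigl(f(\sfB)-f(\sfA)\bigr) \;=\; \frac{(1-\delta)\gamma_{\sfA,\sfB\setminus\sfA}}{k}\bigl(f(\sfB)-f(\sfA)\bigr),
\end{equation*}
which is the claimed bound; note that $m$ cancels precisely. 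The crucial point is the exchangeability step: a naive ``max $\geq$ average'' bound over $\sfC\cap(\sfB\setminus\sfA)$ leaves a random $1/|\sfC\cap(\sfB\setminus\sfA)|$ factor in the denominator, which after taking expectations only yields a $1/(mk)$ prefactor and misses the target by a factor of $k$; exchangeability is what produces the clean deterministic $1/m$ that combines with Bernoulli to give $1/k$.
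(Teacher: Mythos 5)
Your proposal is correct and follows essentially the same route as the paper's proof: the same decomposition over the event $\sfC\cap(\sfB\setminus\sfA)\neq\emptyset$, the same convexity (Bernoulli) step turning $1-\delta^{m/k}$ into $(1-\delta)m/k$, the same reduction of the conditional max to the average over $\sfB\setminus\sfA$, and the same final application of Definition~\ref{def:submodularityRatio} plus monotonicity. Your explicit exchangeability argument merely spells out a symmetry step that the paper's chain of inequalities leaves implicit, and your remark about why a naive max-over-intersection bound loses a factor of $k$ is accurate.
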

We are now ready to present our result that shows that stochastic greedy selections (Algorithm~\ref{algo:stochasticGreedy}) can be applied to weakly submodular functions with provable approximation guarantees. All the proofs missing from the main text are presented in the supplement.

\begin{theorem}
\label{thm:stochasticgreedy}
Let $\sfA^\star$ be the optimum set of size $k$, and $\sfA_i = \{a_1, a_2, \ldots, a_i\}, i \leq k$ be the set built by $\stogreedy$ at step $i$.  Then, 
$ \bbE [ f(\sfA_k)]  \geq \left( 1 - \frac{1}{e^{\gamma_{\sfA_k,k}}} - \delta \right) f(\sfA^*)$.
\end{theorem}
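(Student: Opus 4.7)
The plan is to mimic the classical Nemhauser-style greedy recursion but replace the deterministic ``marginal gain is at least residual$/k$'' step with the randomized version supplied by Lemma~\ref{lem:randomsubset}.

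First, I would condition on $\sfA_i$ and invoke Lemma~\ref{lem:randomsubset} with $\sfA$ playing the role of $\sfA_i$ and $\sfB$ the role of $\sfA^\star$. The subsample $\sfS_\delta$ drawn inside $\stogreedy$ has exactly the size $\lceil d\log(1/\delta)/k \rceil$ required by the lemma, so the marginal gain at step $i+1$ satisfies
\[
\bbE\!\left[f(\sfA_{i+1}) - f(\sfA_i) \,\middle|\, \sfA_i\right] \;\geq\; \frac{(1-\delta)\,\gamma_{\sfA_i,\,\sfA^\star \setminus \sfA_i}}{k}\bigl(f(\sfA^\star) - f(\sfA_i)\bigr).
\]
I would then remove the dependence of the submodularity ratio on $i$: along any sample path, $\sfA_i \subseteq \sfA_k$ and $|\sfA^\star\setminus \sfA_i| \leq k$, and $\sfA_i \cap (\sfA^\star\setminus\sfA_i) = \emptyset$, so $(\sfA_i,\sfA^\star\setminus\sfA_i)$ is a feasible pair in the min of Definition~\ref{def:submodularityRatio}, giving $\gamma_{\sfA_i,\sfA^\star\setminus\sfA_i} \geq \gamma_{\sfA_k,k} =: \gamma$. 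Taking the outer expectation yields the per-step contraction
\[
\bbE[f(\sfA^\star) - f(\sfA_{i+1})] \;\leq\; \Bigl(1 - \tfrac{(1-\delta)\gamma}{k}\Bigr)\,\bbE[f(\sfA^\star) - f(\sfA_i)].
\]

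Unrolling this recursion $k$ times and applying $(1-x/k)^k \leq e^{-x}$ gives $\bbE[f(\sfA_k)] \geq \bigl(1 - e^{-(1-\delta)\gamma}\bigr)f(\sfA^\star)$. The final algebraic step is to convert the $(1-\delta)$ inside the exponent into the additive $\delta$ loss appearing in the theorem. I would establish $e^{-(1-\delta)\gamma} \leq e^{-\gamma} + \delta$ by writing the difference as $e^{-\gamma}(e^{\gamma\delta}-1)$ and using $e^{\gamma\delta}-1 \leq \gamma\delta\,e^{\gamma\delta}$, whence $e^{-\gamma}(e^{\gamma\delta}-1) \leq \gamma\delta\,e^{-\gamma(1-\delta)} \leq \delta$, invoking the weak-submodular regime $\gamma \leq 1$ and $e^{-\gamma(1-\delta)}\leq 1$.

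The main obstacle is a subtle measurability point: $\gamma_{\sfA_k,k}$ is itself a random variable because $\sfA_k$ depends on the random subsamples, so the statement must be read either as a pointwise bound along each sample path or as using a uniform lower bound on $\gamma_{\sfA_k,k}$. The monotonicity step $\gamma_{\sfA_i,\sfA^\star\setminus\sfA_i} \geq \gamma_{\sfA_k,k}$ is fortunately pointwise on every realization (since $\sfA_i \subseteq \sfA_k$ on the same path), so the recursion can be run conditionally and the stated form of the bound is preserved. Beyond this, the argument is routine, with care only required in the exponential-to-additive conversion in the $\gamma \leq 1$ regime.
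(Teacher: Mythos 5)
Your proof follows the paper's argument essentially verbatim: the same application of Lemma~\ref{lem:randomsubset} with $\sfB=\sfA^\star$, the same pointwise lower bound $\gamma_{\sfA_i,\sfA^\star\setminus\sfA_i}\geq\gamma_{\sfA_k,k}$, and the same unrolled contraction $h_i\leq(1-C)^i h_0$ followed by $(1-x/k)^k\leq e^{-x}$. The only cosmetic difference is the final conversion $e^{-(1-\delta)\gamma}\leq e^{-\gamma}+\delta$, which the paper obtains from $a^\delta\leq 1+a\delta$ (valid for every $\gamma>0$) whereas your derivation additionally invokes $\gamma\leq 1$; both suffice in the weakly submodular regime, and your remark on the randomness of $\gamma_{\sfA_k,k}$ is a legitimate caveat that the paper's own proof also glosses over.
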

\begin{proof}
Define $g_i := f( \sfA_{i}) - f(\sfA_{i-1})$.
Using Lemma~\ref{lem:randomsubset} with $\sfB = \sfA^\star$ and $\gamma_{\sfA_{i-1},\sfB\backslash \sfA}\geq \gamma_{\sfA_k, k} $, we get at the 
$i$-th step,
\begin{align}\nonumber
\bbE[  g_i | \sfA_{i-1} = \sfA ] &\geq&  \frac{ ( 1 - \delta ) \gamma_{\sfA_k, k} }{k} (f(\sfB) - f(\sfA))\\
 & \geq & \frac{ ( 1 - \delta ) \gamma_{\sfA_k, k} }{k} (f(\sfA^\star) - f(\sfA)).\label{tempeqn:stochasticgreedy}
\end{align}
Define $h_{i-1} :=  \bbE[  f( \sfA^\star) - f(\sfA_{i-1})]$, $C:=\frac{ ( 1 - \delta ) \gamma_{\sfA_k,k} }{k}$.   
Note that $\bbE[g_i] = h_{i-1} - h_{i}$. Taking expectation on both sides over $\sfA_i$, ~\eqref{tempeqn:stochasticgreedy} becomes
 
 \begin{align*}
h_{i}\leq  (1 - C) h_{i-1}  \leq   (1 - C)^i  h_{0} .
 \end{align*}

Using 
$h_k = \bbE[  f( \sfA^\star) - f(\sfA_k)]$ and 
$h_0 =   f( \sfA^\star) $ above, along with the fact that $1 + a \delta \geq a^\delta $ for $0 \leq \delta \leq 1$,

\begin{eqnarray*}
\bbE [ f(\sfA_k)]  &\geq & \left( 1 - \left(1 - \frac{ ( 1 - \delta ) \gamma_{\sfA_k,k} }{k} \right)^k \right) f(\sfA^*)\\
&\geq & \left( 1 - \exp \left( - \gamma_{\sfA_k,k}\left(1 - \delta\right)  \right) \right)f(\sfA^*) \\
&\geq & \left( 1 - \frac{1}{e^{\gamma_{\sfA_k,k}}} - \delta \right) f(\sfA^*) . \qedhere
\end{eqnarray*}

\end{proof}

Note that $\delta$ is the tradeoff hyperparameter between the speedup achieved by subsampling and the corresponding approximation guarantee. A larger value of $\delta$ means the algorithm is faster with weaker guarantees and vice versa. As $\delta \rightarrow 0$, we tend towards the bound $(1-\frac{1}{e^{\gamma_{\sfA_k,k}}})$ which recovers the bound for weakly submodular functions obtained by~\citet{Kempe:2011ue} for the classic greedy selections (Algorithm~\ref{algo:greedy}), and also recovers the well known bound of $(1-\frac{1}{e})$ for submodular functions. 
\section{Large Scale Sparse Linear Regression}
\label{sec:linearRegression}
In this section, we derive novel bounds for greedy support selections for linear regression using both Algorithms~\ref{algo:distributedgreedy},~\ref{algo:stochasticGreedy}. Recall that $\bX \in \bbR^{n \times d}$ is the \emph{feature matrix}, with $n$ samples and $d$ features, and $\by \in \bbR^n$ is the vector of $n$ \emph{responses}. We assume, without loss of generality, that the columns of the matrix $\bX$ are normalized to $1$, and the response vector is also normalized to have norm $1$. Let $\bC \in \bbR^{d \times d}$ be the covariance matrix. 

We know from standard linear algebra, that for a fixed set of columns $\bX_\sfS$, where $\sfS \subset [d]$ is the index into columns of $\bX$, $\bbeta_\sfS^\star =  (\bX_\sfS^\top \bX_\sfS)^{-1} \bX_\sfS^\top \by$. Minimizing the error in~\eqref{eqn:linearRegression} is thus equivalent to maximizing the following set function (modulo a constant $\|\by\|_2^2$):

\begin{equation}
\label{eq:linearRegressionf}
f(\sfS) := \|\bP_\sfS \by \|_2^2, 
\end{equation}

where $\bP_\sfS:= \bX_\sfS (\bX_\sfS^\top \bX_\sfS)^{-1} \bX_\sfS^\top$ is the projection matrix for orthogonal projection onto the span of columns of $\bX_\sfS$. $f(\cdot)$ as defined above is also the $R^2$ statistic for the linear regression problem~\eqref{eqn:linearRegression}. The respective combinatorial maximization is: 

\begin{equation}
\label{eq:maxfLinearRegression}
\max_{|\sfS|\leq k } f(\sfS).
\end{equation}

The function defined in ~\eqref{eq:maxfLinearRegression} is \emph{not} submodular. However, submodularity is not required for giving guarantees for greedy forward selection. A bounded submodularity ratio is a weaker condition that is sufficient to provide such approximation guarantees. ~\citet{Kempe:2011ue} analyzed the greedy algorithm for ~\eqref{eqn:linearRegression}, and showed that this function was weakly submodular. Our goal is to maximize the $R^2$ statistic for linear regression using Algorithms~\ref{algo:distributedgreedy},~\ref{algo:stochasticGreedy}.

For a positive semidefinite matrix $\bC$, $\lambda_{\text{max}}(\bC,k)$ and $\lambda_{\text{min}}(\bC,k)$ be the largest and smallest $k$-sparse eigenvalue of $\bC$. We make use of the following result from~\citet{Kempe:2011ue}:

\begin{lemma}
\label{lem:DasKempe}
For the $R^2$ statistic~\eqref{eq:linearRegressionf}, $\gamma_{\sfS, k} \geq \lambda_{\text{min}} (\bC, \, k + |\sfS|)$.
\end{lemma}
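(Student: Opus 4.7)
The plan is to fix arbitrary disjoint subsets $\sfL \subseteq \sfU$ and $\sfT$ with $|\sfT| \le k$ and prove $\gamma_{\sfL,\sfT} \ge \lambda_{\text{min}}(\bC,\,|\sfU|+k)$ (I rename the ambient set $\sfS$ in the statement to $\sfU$ to avoid a clash with the inner $\sfS$ in Definition~\ref{def:submodularityRatio}); minimizing over admissible $\sfL,\sfT$ then yields the lemma. The key idea is to reduce both the numerator and the denominator of $\gamma_{\sfL,\sfT}$ to a Rayleigh quotient on a partial-correlation matrix obtained by projecting out $\sfL$, and then tie the smallest eigenvalue of that matrix to a sparse eigenvalue of $\bC$ through a Schur-complement inequality.

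First I would compute closed forms for the marginal gains. Let $\bP_\sfL$ denote the orthogonal projection onto $\text{span}(\bX_\sfL)$, and put $\tilde\bX_\sfT := (\bI-\bP_\sfL)\bX_\sfT$, $\tilde\by := (\bI-\bP_\sfL)\by$. Direct projection algebra, using the rank-one update $\bP_{\sfL\cup\{j\}} = \bP_\sfL + \tilde\bx_j \tilde\bx_j^\top/\|\tilde\bx_j\|^2$ and its block analogue, gives
\[
f(\sfL\cup\{j\}) - f(\sfL) = \frac{(\tilde\by^\top \tilde\bx_j)^2}{\|\tilde\bx_j\|^2}, \qquad f(\sfL\cup\sfT) - f(\sfL) = \tilde\by^\top \tilde\bX_\sfT (\tilde\bX_\sfT^\top \tilde\bX_\sfT)^{-1} \tilde\bX_\sfT^\top \tilde\by.
\]
Normalizing $\hat\bx_j := \tilde\bx_j/\|\tilde\bx_j\|$ and assembling $\hat\bX_\sfT$, the diagonal factors cancel inside the denominator via a diagonal similarity, and with $\vu := \hat\bX_\sfT^\top \tilde\by$ the submodularity ratio collapses to
\[
\gamma_{\sfL,\sfT} \;=\; \frac{\vu^\top \vu}{\vu^\top (\hat\bX_\sfT^\top \hat\bX_\sfT)^{-1} \vu} \;\ge\; \lambda_{\text{min}}\!\bigl(\hat\bX_\sfT^\top \hat\bX_\sfT\bigr),
\]
by a Rayleigh bound.

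Next I would relate $\lambda_{\text{min}}(\hat\bX_\sfT^\top \hat\bX_\sfT)$ to a sparse eigenvalue of $\bC$ in two steps. Because $\|\bx_j\|=1$ forces $\|\tilde\bx_j\|\le 1$, the diagonal rescaling sending $\tilde\bX_\sfT$ to $\hat\bX_\sfT$ only inflates singular values, so $\lambda_{\text{min}}(\hat\bX_\sfT^\top \hat\bX_\sfT) \ge \lambda_{\text{min}}(\tilde\bX_\sfT^\top \tilde\bX_\sfT)$. The latter is precisely the Schur complement of the $\bX_\sfL^\top\bX_\sfL$ block inside $\bC_{\sfL\cup\sfT} := \bX_{\sfL\cup\sfT}^\top \bX_{\sfL\cup\sfT}$. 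A standard one-line argument---take the minimizing unit eigenvector $\vv$ of the Schur complement, extend it to $\vw = \bigl(-(\bX_\sfL^\top\bX_\sfL)^{-1}\bX_\sfL^\top\bX_\sfT \vv,\,\vv\bigr)$, and observe that $\vw^\top \bC_{\sfL\cup\sfT} \vw = \vv^\top(\tilde\bX_\sfT^\top\tilde\bX_\sfT)\vv = \lambda_{\text{min}}(\tilde\bX_\sfT^\top\tilde\bX_\sfT)$ while $\|\vw\|^2 \ge 1$---yields $\lambda_{\text{min}}(\tilde\bX_\sfT^\top \tilde\bX_\sfT) \ge \lambda_{\text{min}}(\bC_{\sfL\cup\sfT})$. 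Monotonicity of sparse eigenvalues in the sparsity parameter then gives $\lambda_{\text{min}}(\bC_{\sfL\cup\sfT}) \ge \lambda_{\text{min}}(\bC,|\sfL|+|\sfT|) \ge \lambda_{\text{min}}(\bC,|\sfU|+k)$, closing the chain.

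The only step that demands care is the Schur-complement inequality sketched above; the rest is routine projection algebra plus a Rayleigh bound. Invertibility of $\tilde\bX_\sfT^\top\tilde\bX_\sfT$ is guaranteed whenever $\lambda_{\text{min}}(\bC,|\sfU|+k)>0$, and in the degenerate case the claim holds vacuously. Since the derivation treated $\sfL\subseteq\sfU$ and $\sfT$ with $|\sfT|\le k$ arbitrarily, minimizing over this family gives $\gamma_{\sfU,k} \ge \lambda_{\text{min}}(\bC,|\sfU|+k)$, which is the lemma after undoing the renaming.
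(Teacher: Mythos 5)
Your argument is correct, but there is nothing in the paper to compare it against: Lemma~\ref{lem:DasKempe} is imported from \citet{Kempe:2011ue} and stated without proof, so you have effectively reconstructed the original Das--Kempe derivation. Your chain --- closed forms for the marginal gains via the projection update, cancellation of the normalizing diagonal in the denominator, the Rayleigh bound $\gamma_{\sfL,\sfT} \geq \lambda_{\min}(\hat\bX_\sfT^\top\hat\bX_\sfT)$, the congruence bound $\lambda_{\min}(\bD^{-1}M\bD^{-1}) \geq \lambda_{\min}(M)$ for $\bD \preceq \bI$, the Schur-complement interlacing step, and monotonicity of sparse eigenvalues --- all checks out and matches the structure of the cited proof. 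The only loose end, shared with the original, is the convention for $\gamma_{\sfL,\sfT}$ when both numerator and denominator vanish, which does not affect the bound.
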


Recall that we need to only bound the submodularity ratio $\gamma_{\sfS_g, k}$ over the selected greedy set to obtain the approximation bounds (See Theorems~\ref{thm:distributedgreedy} and~\ref{thm:stochasticgreedy}). Lemma~\ref{lem:DasKempe} provides a data dependent union bound for the submodularity ratio in terms of sparse eigenvalue of the covariance matrix. We now provide the corresponding approximation bounds for Algorithm~\ref{algo:stochasticGreedy} next.

\begin{corollary}
\label{cor:linearRegression}
Let $\sfA^\star$ be the optimal support set for sparsity constrained maximization of the $R^2$ statistic~\eqref{eq:maxfLinearRegression}. Let $\sfA_{sg}$ be the solution returned by $\stogreedy$ ($\sfS, k, \delta$). Then, 
\begin{equation*}
\bbE [ f(\sfA_{sg})]  \geq \left( 1 - \frac{1}{e^{\lambda_{\text{min}} (C, \, 2k)}} - \delta \right) f(\sfA^*).
\end{equation*}
\end{corollary}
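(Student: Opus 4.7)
The plan is to derive Corollary~\ref{cor:linearRegression} as an essentially direct composition of Theorem~\ref{thm:stochasticgreedy} and Lemma~\ref{lem:DasKempe}, specialized to the $R^2$ set function in~\eqref{eq:linearRegressionf}. First I would verify that the preconditions of Theorem~\ref{thm:stochasticgreedy} are met for $f(\sfS)=\|\bP_\sfS \by\|_2^2$: the function is monotone (adding a column can only increase the projection energy), and the submodularity ratio $\gamma_{\sfA_k,k}$ is well-defined and strictly positive since $\bC$ is assumed to have positive $2k$-sparse eigenvalues (otherwise the bound is vacuous and nothing needs to be proved). With this in place, applying Theorem~\ref{thm:stochasticgreedy} to $\sfA_{sg}$ immediately yields
\begin{equation*}
\bbE[f(\sfA_{sg})] \;\geq\; \left(1 - \frac{1}{e^{\gamma_{\sfA_{sg},k}}} - \delta\right) f(\sfA^\star).
\end{equation*}

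Next I would invoke Lemma~\ref{lem:DasKempe} with the choice $\sfS = \sfA_{sg}$. Since $\stogreedy$ returns a set of cardinality exactly $k$, we have $|\sfA_{sg}| = k$, and hence
\begin{equation*}
\gamma_{\sfA_{sg}, k} \;\geq\; \lambda_{\text{min}}(\bC,\, k + |\sfA_{sg}|) \;=\; \lambda_{\text{min}}(\bC,\, 2k).
\end{equation*}

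Finally I would combine these two inequalities. Because the map $x \mapsto 1 - e^{-x} - \delta$ is monotonically increasing in $x$, lower bounding $\gamma_{\sfA_{sg},k}$ by $\lambda_{\min}(\bC,2k)$ preserves the direction of inequality, giving
\begin{equation*}
\bbE[f(\sfA_{sg})] \;\geq\; \left(1 - \frac{1}{e^{\lambda_{\min}(\bC,\,2k)}} - \delta\right) f(\sfA^\star),
\end{equation*}
which is precisely the claim. There is no genuine obstacle here; the only subtlety worth flagging in the write-up is that the Lemma must be applied \emph{after} the algorithm has produced $\sfA_{sg}$, so that the $2k$-sparse eigenvalue appearing in the bound correctly accounts for the union $\sfA_{sg}\cup\sfA^\star$ implicit in the definition of $\gamma_{\sfA_{sg},k}$; once this is noted, the corollary falls out in a single line of substitution.
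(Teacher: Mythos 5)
Your proposal is correct and matches the paper's (implicit) argument exactly: the corollary is obtained by substituting the Lemma~\ref{lem:DasKempe} bound $\gamma_{\sfA_{sg},k}\geq\lambda_{\min}(\bC,2k)$ (using $|\sfA_{sg}|=k$) into the guarantee of Theorem~\ref{thm:stochasticgreedy} and using monotonicity of $x\mapsto 1-e^{-x}-\delta$. The paper treats this as immediate and does not write out a separate proof; your additional checks of monotonicity and well-definedness are harmless and correct.
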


To obtain bounds for Algorithm~\ref{algo:distributedgreedy}, we also need to bound the subadditivity ratio (recall Definition~\ref{def:subaddivityRatio}).
\begin{lemma}
\label{lem:subdadditivityRegression}
For the maximization of the $R^2$~\eqref{eq:linearRegressionf}, $\nu_\sfS \geq \frac{\lambda_{\min}(\bC_\sfS)}{ \lambda_{\max(\bC_\sfS)}  }$, where $\bC_\sfS$ is the submatrix of $\bC$ with rows and columns indexed by $\sfS$.
\end{lemma}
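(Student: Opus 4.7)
The plan is to write $f$ in a form that exposes the role of $\bC_\sfS$. By the normal-equation derivation already used in the paper, for any index set $\sfT$,
\begin{equation*}
f(\sfT) \;=\; \by^\top \bX_\sfT (\bX_\sfT^\top \bX_\sfT)^{-1} \bX_\sfT^\top \by \;=\; \bu_\sfT^\top \bC_\sfT^{-1} \bu_\sfT,
\end{equation*}
where $\bu := \bX_\sfS^\top \by$ (so $\bu_\sfT$ is the subvector of $\bu$ restricted to $\sfT$, and this coincides with $\bX_\sfT^\top \by$ whenever $\sfT\subseteq\sfS$). Fix an arbitrary disjoint partition $\sfS=\sfA\cup\sfB$; the goal is to lower bound $(f(\sfA)+f(\sfB))/f(\sfS)$ by $\lambda_{\min}(\bC_\sfS)/\lambda_{\max}(\bC_\sfS)$ uniformly in the partition.

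First I would lower bound the numerator using the standard inequality $\vv^\top \bM^{-1}\vv \geq \|\vv\|^2/\lambda_{\max}(\bM)$ for any positive-definite $\bM$, applied to $\bC_\sfA$ and $\bC_\sfB$:
\begin{equation*}
f(\sfA)+f(\sfB) \;\geq\; \frac{\|\bu_\sfA\|_2^2}{\lambda_{\max}(\bC_\sfA)} + \frac{\|\bu_\sfB\|_2^2}{\lambda_{\max}(\bC_\sfB)}.
\end{equation*}
Next I would upper bound the denominator in the opposite direction, $\vv^\top\bM^{-1}\vv \leq \|\vv\|^2/\lambda_{\min}(\bM)$, giving $f(\sfS)\leq \|\bu\|_2^2/\lambda_{\min}(\bC_\sfS)$. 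Since $\sfA$ and $\sfB$ are disjoint coordinate blocks of $\bu$, we have the clean identity $\|\bu\|_2^2=\|\bu_\sfA\|_2^2+\|\bu_\sfB\|_2^2$.

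The final ingredient is Cauchy's interlacing theorem for principal submatrices of a symmetric positive semidefinite matrix: $\bC_\sfA$ and $\bC_\sfB$ are principal submatrices of $\bC_\sfS$, so $\lambda_{\max}(\bC_\sfA),\,\lambda_{\max}(\bC_\sfB)\le \lambda_{\max}(\bC_\sfS)$. Replacing both denominators in the numerator bound by the common upper bound $\lambda_{\max}(\bC_\sfS)$ and combining with the denominator bound yields
\begin{equation*}
\frac{f(\sfA)+f(\sfB)}{f(\sfS)} \;\geq\; \frac{\lambda_{\min}(\bC_\sfS)}{\lambda_{\max}(\bC_\sfS)}\cdot\frac{\|\bu_\sfA\|_2^2+\|\bu_\sfB\|_2^2}{\|\bu\|_2^2} \;=\; \frac{\lambda_{\min}(\bC_\sfS)}{\lambda_{\max}(\bC_\sfS)},
\end{equation*}
and taking the minimum over partitions gives the stated bound on $\nu_\sfS$.

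There is no real obstacle here; the only subtlety is recognizing that $f(\sfT)$ admits the quadratic form in $\bC_\sfT^{-1}$ and that the two subvectors $\bu_\sfA,\bu_\sfB$ live in orthogonal coordinate blocks, so their squared norms add exactly — this is precisely what kills the $\|\bu\|_2^2$ factor and leaves only the eigenvalue ratio. The use of interlacing to bound $\lambda_{\max}$ of principal submatrices by $\lambda_{\max}(\bC_\sfS)$ is what tightens the bound to a ratio of eigenvalues of the \emph{same} matrix $\bC_\sfS$ rather than a product over $\sfA$ and $\sfB$ separately.
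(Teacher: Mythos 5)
Your proof is correct and follows essentially the same route as the paper's: both express $f(\sfT)$ as the quadratic form $\by^\top\bX_\sfT(\bX_\sfT^\top\bX_\sfT)^{-1}\bX_\sfT^\top\by$, bound it below by $\|\bX_\sfT^\top\by\|_2^2/\lambda_{\max}(\bC_\sfT)$ and above by $\|\bX_\sfS^\top\by\|_2^2/\lambda_{\min}(\bC_\sfS)$, use the principal-submatrix (interlacing) inequality to replace $\lambda_{\max}(\bC_\sfA),\lambda_{\max}(\bC_\sfB)$ by $\lambda_{\max}(\bC_\sfS)$, and exploit the exact additivity $\|\bX_\sfA^\top\by\|_2^2+\|\bX_\sfB^\top\by\|_2^2=\|\bX_\sfS^\top\by\|_2^2$. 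The only difference is presentational (you name the interlacing step explicitly, the paper leaves it implicit in the inequality $\lambda_{\max}(\bX_\sfA^\top\bX_\sfA)\le\lambda_{\max}(\bX_\sfS^\top\bX_\sfS)$).
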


We can now provide the bounds for greedy support selection using Algorithm~\ref{algo:distributedgreedy} for the linear regression problem~\eqref{eq:linearRegressionf}.

\begin{theorem}
Let $\sfA_{dg}$ be the solution returned by the $\distgreedy$ algorithm, and let $\sfA^\star$ be the optimal solution for the sparsity constrained maximization of $R^2$~\eqref{eq:maxfLinearRegression}. Then, 
\begin{align*}
f(\sfA_{dg}) & \geq  \frac{1}{2}  \frac{\lambda_{\min}(\bC_{\sfA^\star})}{ \lambda_{\max}(\bC_{\sfA^\star})  }  \left(1 - \exp (- \lambda_{\text{min}} (\bC, \, 2k) )\right) f(\sfA^*) \\
&\geq \frac{1}{2}  \frac{\lambda_{\min}(\bC,k)}{ \lambda_{\max}(\bC,k)  }  \left(1 - \exp (- \lambda_{\text{min}} (\bC, \, 2k) )\right) f(\sfA^*).
\end{align*}
\end{theorem}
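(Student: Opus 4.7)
The plan is to instantiate Theorem~\ref{thm:distributedgreedy} with the quantitative bounds on the submodularity ratio $\gamma$ and the subadditivity ratio $\nu$ that are available for the $R^2$ objective, and then simplify to expose only sparse eigenvalues of $\bC$.

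First, I would invoke Theorem~\ref{thm:distributedgreedy}, which gives
\begin{equation*}
\bbE[f(\sfA_{dg})] \geq \frac{\nu}{2}\bigl(1 - \exp(-\gamma)\bigr) f(\sfA^\star),
\end{equation*}
where $\gamma = \min\{\gamma_{\sfG_i,k},\gamma_{\sfG,k}\}$ and $\nu$ is the subadditivity ratio used in the proof. Tracing through that proof, the factor $\nu_k$ arises only from decomposing $\sfA^\star$ as $\sfS_i \cup \sfT_i$, so the argument in fact only needs $\nu_{\sfA^\star}$; since $\sfA^\star$ has size $k$ this is a valid (and tighter) choice.

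Second, I would bound $\gamma$ using Lemma~\ref{lem:DasKempe}. Because the local outputs $\sfG_i$ and the aggregated output $\sfG$ all have cardinality $k$, the lemma yields $\gamma_{\sfG_i,k},\, \gamma_{\sfG,k} \geq \lambda_{\min}(\bC,\, 2k)$, and hence $\gamma \geq \lambda_{\min}(\bC,\, 2k)$. Third, I would bound $\nu_{\sfA^\star}$ using Lemma~\ref{lem:subdadditivityRegression}, which gives $\nu_{\sfA^\star} \geq \lambda_{\min}(\bC_{\sfA^\star})/\lambda_{\max}(\bC_{\sfA^\star})$. Substituting both bounds into the inequality above and using monotonicity of $1-\exp(-\cdot)$ delivers the first inequality.

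For the second, coarser inequality, I would appeal to the Cauchy interlacing principle (or directly the definition of $k$-sparse eigenvalues): for any $k$-subset $\sfS$, the principal submatrix $\bC_\sfS$ satisfies $\lambda_{\min}(\bC_\sfS) \geq \lambda_{\min}(\bC,k)$ and $\lambda_{\max}(\bC_\sfS) \leq \lambda_{\max}(\bC,k)$. Specializing to $\sfS = \sfA^\star$ and plugging into the first inequality finishes the proof.

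The main obstacle is the cosmetic gap between the $\nu_k$ written in Theorem~\ref{thm:distributedgreedy} and the data-dependent quantity $\nu_{\sfA^\star}$ appearing in the first claimed inequality; resolving it requires observing that the proof of Theorem~\ref{thm:distributedgreedy} only ever splits $\sfA^\star$ (so $\nu_{\sfA^\star}$ is in fact the relevant ratio). Everything else is a direct substitution of Lemmas~\ref{lem:DasKempe} and~\ref{lem:subdadditivityRegression} followed by the interlacing step, so no new technical machinery is needed.
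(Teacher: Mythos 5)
Your proposal is correct and follows essentially the same route as the paper: substitute the submodularity-ratio bound of Lemma~\ref{lem:DasKempe} and the subadditivity-ratio bound of Lemma~\ref{lem:subdadditivityRegression} into Theorem~\ref{thm:distributedgreedy}, then relax the eigenvalues of $\bC_{\sfA^\star}$ to the $k$-sparse eigenvalues of $\bC$. Your explicit observation that the proof of Theorem~\ref{thm:distributedgreedy} only ever splits $\sfA^\star$, so that $\nu_{\sfA^\star}$ (rather than the uniform $\nu_k$) suffices, is a point the paper's one-line proof leaves implicit, and you handle it correctly.
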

\begin{proof}
Follows from Lemma~\ref{lem:subdadditivityRegression} and by a uniform bound on $\gamma$ in Theorem~\ref{thm:distributedgreedy} as $\gamma \geq \lambda_{\text{min}}(\bC, 2k)$ (from Lemma~\ref{lem:DasKempe}).
\end{proof}

\section{Support Selection for general functions}
\label{sec:generalfunctions}
In this section, we leverage recent results from connections of convexity to submodularity to provide support selection bounds for Algorithms~\ref{algo:distributedgreedy},~\ref{algo:stochasticGreedy} for general concave functions. 

The sparsity constraint problem with a given $k\leq d$ for a concave function $g: \bbR^d \rightarrow \bbR$ is:

\begin{equation}
\label{eqn:supportSelectionf}
\max_{\|\bx\|_0 \leq k } g(\bx).
\end{equation}
Similar to the developments in Section~\ref{sec:linearRegression}, we can define an associated set function as: 
\begin{equation}
\label{eqn:generalSetFunction}
f(\sfS) := \max_{\text{supp}(\bx) \subset \sfS } g(\bx) - g(\mathbf{0}).
\end{equation}

We recall that the submodular guarantee of $(1-\frac{1}{e})$ is for \emph{normalized} submodular functions. To extend the notion of normalization to general support selection, we subtract $g(\mathbf{0})$. 

To bound the submodularity ratio for $f(\cdot)$ in~\eqref{eqn:generalSetFunction}, the concept of strong concavity and smoothness is required. 
For a function $g: \bbR^d \rightarrow \bbR$, define $\mathcal{D}_g(\bx,\by)~:=~g(\by)~-~g(\bx)~-~\langle \nabla g(\bx) , \by - \bx \rangle$.  We say $g(\cdot)$ is $m_\Omega$ Restricted Strongly Concave (RSC) and $L_\Omega$ Restricted Strongly Smooth (RSM) over a subdomain $\Omega \subset \bbR^d$ if 
\begin{equation*}
-\frac{m_\Omega}{2} {\| \by -\bx\|}_2^2 \geq \mathcal{D}_g(\bx,\by) \geq  - \frac{L_\Omega}{2}  {\| \by -\bx\|}_2^2  .
\end{equation*}

  We make use of the following result that lower bounds the submodularity ratio for $f(\cdot)$:
\begin{lemma}[\citet{Elenberg:2016tq}]
\label{lem:elenberg}
If the given function $g(\cdot)$ is $m$-strongly concave on all $| \sfS |  + k $ sparse supports and $L$-smooth over all $| \sfS| + 1$ sparse supports, 
\begin{equation*}
\gamma_{\sfS,k} \geq \frac{m}{L}.
\end{equation*}
\end{lemma}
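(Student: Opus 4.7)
The plan is to prove the bound $\gamma_{\sfL,\sfS} \ge m/L$ by separately lower bounding the numerator $\sum_{j\in\sfS}[f(\sfL\cup\{j\})-f(\sfL)]$ in terms of the partial gradient of $g$ at the restricted optimizer, and upper bounding the denominator $f(\sfL\cup\sfS)-f(\sfL)$ by the same quantity, so that the two gradient-norm-squared factors cancel and leave $m/L$. Throughout, let $\bx^{\sfL}$ denote the maximizer of $g$ over vectors supported on $\sfL$, so that $f(\sfL)=g(\bx^{\sfL})-g(\mathbf{0})$, and similarly define $\bx^{\sfL\cup\sfS}$. By first-order optimality, $\nabla_{\sfL} g(\bx^{\sfL}) = \mathbf{0}$, and since coordinates outside $\sfL\cup\sfS$ play no role in the gap, only the block $\nabla_{\sfS} g(\bx^{\sfL})$ of the gradient is relevant on both sides.

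For the numerator, for each $j\in\sfS$ the marginal gain $f(\sfL\cup\{j\})-f(\sfL)$ is at least the improvement obtained by the best one-dimensional move in coordinate $j$ from $\bx^{\sfL}$. Applying the RSM upper bound on $-\mathcal{D}_g$ to $\by=\bx^{\sfL}+t\,e_j$ gives
\begin{equation*}
g(\bx^{\sfL}+t e_j) - g(\bx^{\sfL}) \;\ge\; t\,\nabla_j g(\bx^{\sfL}) - \tfrac{L}{2} t^2,
\end{equation*}
and maximizing the right-hand side over $t$ yields $\tfrac{(\nabla_j g(\bx^{\sfL}))^2}{2L}$. Summing over $j\in\sfS$ produces
\begin{equation*}
\sum_{j\in\sfS}\bigl[f(\sfL\cup\{j\}) - f(\sfL)\bigr] \;\ge\; \frac{\|\nabla_{\sfS} g(\bx^{\sfL})\|_2^2}{2L}.
\end{equation*}

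For the denominator I would invoke RSC at the pair $(\bx^{\sfL}, \bx^{\sfL\cup\sfS})$:
\begin{equation*}
g(\bx^{\sfL\cup\sfS}) - g(\bx^{\sfL}) \;\le\; \langle \nabla g(\bx^{\sfL}),\,\bx^{\sfL\cup\sfS}-\bx^{\sfL}\rangle - \tfrac{m}{2}\|\bx^{\sfL\cup\sfS}-\bx^{\sfL}\|_2^2.
\end{equation*}
Because $\nabla_{\sfL} g(\bx^{\sfL})=0$ and both vectors vanish off $\sfL\cup\sfS$, the inner product collapses to $\langle \nabla_{\sfS} g(\bx^{\sfL}),\,(\bx^{\sfL\cup\sfS})_{\sfS}\rangle$. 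Relaxing the constraint on the displacement $\bz := \bx^{\sfL\cup\sfS}-\bx^{\sfL}$ and maximizing $\langle \nabla g(\bx^{\sfL}),\bz\rangle - \tfrac{m}{2}\|\bz\|_2^2$ over all $\bz\in\mathbb{R}^d$ gives the closed form $\tfrac{\|\nabla g(\bx^{\sfL})\|_2^2}{2m} = \tfrac{\|\nabla_{\sfS} g(\bx^{\sfL})\|_2^2}{2m}$, where the last equality again uses the vanishing of $\nabla_{\sfL}g(\bx^{\sfL})$.

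Dividing the numerator bound by the denominator bound cancels $\|\nabla_{\sfS} g(\bx^{\sfL})\|_2^2$ and delivers $\gamma_{\sfL,\sfS}\ge m/L$, which is the claim. The mildly subtle step is justifying that only the $\sfS$-block of the gradient appears on both sides: in the numerator because the other blocks either index a zero gradient (on $\sfL$) or a coordinate the one-step move does not touch, and in the denominator because relaxing to an unconstrained maximization over $\bz$ is an upper bound, and the unconstrained maximizer automatically concentrates on the support of $\nabla g(\bx^{\sfL})$ which lies in $\sfS$. The RSC/RSM radii need only cover supports of size $|\sfL|+k$ and $|\sfL|+1$ respectively, matching the hypotheses of the lemma. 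No obstacle beyond bookkeeping is anticipated; the main care is in identifying the correct optimizers $\bx^{\sfL}$ and $\bx^{\sfL\cup\sfS}$ and exploiting first-order optimality to isolate the $\sfS$-block.
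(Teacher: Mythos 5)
The paper does not actually prove this lemma: it is imported verbatim from \citet{Elenberg:2016tq}, so there is no in-paper proof to compare against. Your reconstruction is the standard argument from that reference and it is correct: lower-bound each singleton gain by the best one-coordinate move from $\bx^{\sfL}$ via restricted smoothness (giving $\|\nabla_{\sfS}g(\bx^{\sfL})\|_2^2/2L$ for the numerator), upper-bound the full gain via restricted strong concavity plus first-order optimality $\nabla_{\sfL}g(\bx^{\sfL})=\mathbf{0}$ (giving $\|\nabla_{\sfS}g(\bx^{\sfL})\|_2^2/2m$ for the denominator), and divide. This is also exactly the technique the paper itself uses for the analogous subadditivity bound (Theorem~\ref{thm:RSCsubadd} via Lemmas~\ref{lem:rsclem1} and~\ref{lem:rsclem2}), which is the special case of your two bounds anchored at $\mathbf{0}$ with $\sfL=\emptyset$. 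One small imprecision to fix: after relaxing the displacement $\bz$ to all of $\bbR^d$ you write that the unconstrained maximum equals $\|\nabla g(\bx^{\sfL})\|_2^2/2m = \|\nabla_{\sfS}g(\bx^{\sfL})\|_2^2/2m$, justified by the gradient ``concentrating'' on $\sfS$; that equality is false in general because $\nabla g(\bx^{\sfL})$ can be nonzero off $\sfL\cup\sfS$. The correct order of operations, which you in fact set up in the preceding sentence, is to first collapse the inner product to $\langle\nabla_{\sfS}g(\bx^{\sfL}),\bz_{\sfS}\rangle$ (legitimate since $\bz$ is supported on $\sfL\cup\sfS$ and the $\sfL$-block of the gradient vanishes), then maximize $\langle\nabla_{\sfS}g(\bx^{\sfL}),\bz_{\sfS}\rangle-\tfrac{m}{2}\|\bz\|_2^2$, whose optimum is supported on $\sfS$ and equals $\|\nabla_{\sfS}g(\bx^{\sfL})\|_2^2/2m$. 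With that one-line repair, and the usual convention for the degenerate case $f(\sfL\cup\sfS)=f(\sfL)$, the proof is complete and the RSC/RSM support sizes you invoke ($|\sfL|+|\sfS'|\le|\sfS|+k$ and $|\sfL|+1\le|\sfS|+1$) match the hypotheses.
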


\begin{corollary}
Say the function $ g(\bx):\bbR^d \rightarrow \bbR$ satisfies the assumptions of Lemma~\ref{lem:elenberg}. Let $\sfA^\star$ be the optimal support set that maximizes $g(\cdot)$ under the sparsity constraint~\eqref{eqn:supportSelectionf}. Let $\sfA_{sg}$ be the solution set returned by $\stogreedy$. Then, 
\begin{equation*}
\bbE [ f(\sfA_{sg})]  \geq \left( 1 - \frac{1}{e^{\nicefrac{m}{L}}} - \delta \right) f(\sfA^*).
\end{equation*}

\end{corollary}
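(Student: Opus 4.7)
The plan is to derive the corollary as a direct consequence of Theorem~\ref{thm:stochasticgreedy} combined with the submodularity-ratio bound from Lemma~\ref{lem:elenberg}. In other words, the whole argument is a two-line composition: Theorem~\ref{thm:stochasticgreedy} already gives an approximation guarantee for $\stogreedy$ in terms of the submodularity ratio $\gamma_{\sfA_k,k}$ of the selected set, so all that is needed is to instantiate that bound for the particular set function $f(\sfS) := \max_{\text{supp}(\bx) \subseteq \sfS} g(\bx) - g(\mathbf{0})$ defined in equation~\eqref{eqn:generalSetFunction}, and to lower-bound its submodularity ratio using the RSC/RSM assumptions on $g$.

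First, I would note that since $\sfA_{sg}$ is produced by $\stogreedy$ with sparsity budget $k$, we have $|\sfA_{sg}| = k$. Applying Theorem~\ref{thm:stochasticgreedy} to the set function $f(\cdot)$ defined in~\eqref{eqn:generalSetFunction} (which is monotone because it is a maximum over a nested family of supports) yields
\begin{equation*}
\bbE[f(\sfA_{sg})] \;\geq\; \left(1 - \frac{1}{e^{\gamma_{\sfA_{sg},k}}} - \delta\right) f(\sfA^\star).
\end{equation*}

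Next, I would invoke Lemma~\ref{lem:elenberg} with $\sfS = \sfA_{sg}$. Since $|\sfA_{sg}| = k$, the hypotheses are that $g$ is $m$-strongly concave on all $|\sfA_{sg}| + k = 2k$ sparse supports and $L$-smooth on all $|\sfA_{sg}| + 1 = k+1$ sparse supports, which are exactly the assumptions of Lemma~\ref{lem:elenberg} as transmitted through the corollary's hypothesis. This gives $\gamma_{\sfA_{sg},k} \geq m/L$.

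Finally, since the map $x \mapsto 1 - e^{-x}$ is monotonically increasing in $x$, substituting the lower bound on $\gamma_{\sfA_{sg},k}$ into the previous display yields
\begin{equation*}
\bbE[f(\sfA_{sg})] \;\geq\; \left(1 - \frac{1}{e^{m/L}} - \delta\right) f(\sfA^\star),
\end{equation*}
as claimed. There is essentially no obstacle here beyond bookkeeping: the corollary is a plug-in combination of an already-proven algorithmic guarantee (Theorem~\ref{thm:stochasticgreedy}) with an already-proven structural lower bound on the submodularity ratio (Lemma~\ref{lem:elenberg}). The only thing worth being slightly careful about is matching the sparsity levels in Lemma~\ref{lem:elenberg}'s hypotheses to the size of the greedy solution, which is why the assumption is stated via Lemma~\ref{lem:elenberg} itself rather than with explicit sparsity parameters in the corollary.
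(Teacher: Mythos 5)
Your proof is correct and matches the paper's intent exactly: the paper states this corollary without a separate proof precisely because it is the plug-in combination of Theorem~\ref{thm:stochasticgreedy} with the bound $\gamma_{\sfA_{sg},k}\geq m/L$ from Lemma~\ref{lem:elenberg}, together with monotonicity of $x\mapsto 1-e^{-x}$. Your care in matching the sparsity levels ($2k$ and $k+1$) to the hypotheses of Lemma~\ref{lem:elenberg} is exactly the right bookkeeping.
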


\subsection{RSC implies Weak Subadditivity}
In this section, we establish a lower bound on the subadditivity ratio in terms of only the restricted strong concavity (RSC) and smoothness (RSM) constants. This is analogous to lower bounding the submodularity ratio by~\citet{Elenberg:2016tq}.

\begin{theorem}
\label{thm:RSCsubadd}
Say the function $ g(\bx):\bbR^d \rightarrow \bbR$ is $m$ strongly concave and $L$-smooth over all $\bx$ supported on $\sfS$. Then, the subadditivity ratio can be lower bounded as: 
\begin{equation}
\nu_\sfS \geq  \frac{m}{L}.
\end{equation}
\end{theorem}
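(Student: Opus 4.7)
Fix a disjoint partition $\sfA \cup \sfB = \sfS$. Our goal is to lower bound $f(\sfA) + f(\sfB)$ by $(m/L)\,f(\sfS)$, where $f(\sfT) = \max_{\text{supp}(\bx)\subset\sfT} g(\bx) - g(\mathbf{0})$. The key idea is to control both $f(\sfA)+f(\sfB)$ from below and $f(\sfS)$ from above using the \emph{same} quantity, namely the norm of the gradient $\nabla g(\mathbf{0})$ restricted to the coordinates in $\sfS$. Smoothness will give the lower bound, and strong concavity will give the upper bound, and the quadratic maximizers of the two surrogates will differ by exactly the ratio $L/m$.

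For the lower bound on $f(\sfT)$ with $\sfT \in \{\sfA,\sfB\}$, I would invoke RSM expanded around $\mathbf{0}$: for any $\bx$ supported on $\sfT$,
\begin{equation*}
g(\bx) \;\geq\; g(\mathbf{0}) \;+\; \langle \nabla g(\mathbf{0}),\,\bx\rangle \;-\; \tfrac{L}{2}\|\bx\|_2^2.
\end{equation*}
Maximizing the right hand side over vectors supported on $\sfT$ (an unconstrained concave quadratic once the support is fixed) gives the closed-form bound $f(\sfT) \geq \frac{1}{2L}\|\nabla g(\mathbf{0})_\sfT\|_2^2$. Dually, applying RSC around $\mathbf{0}$ gives, for every $\bx$ supported on $\sfS$,
\begin{equation*}
g(\bx) \;\leq\; g(\mathbf{0}) \;+\; \langle \nabla g(\mathbf{0}),\,\bx\rangle \;-\; \tfrac{m}{2}\|\bx\|_2^2,
\end{equation*}
so the maximum of the right hand side (again a concave quadratic) upper bounds $f(\sfS)$, yielding $f(\sfS) \leq \frac{1}{2m}\|\nabla g(\mathbf{0})_\sfS\|_2^2$.

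Finally, since $\sfA$ and $\sfB$ are disjoint with $\sfA\cup\sfB = \sfS$, the gradient splits orthogonally: $\|\nabla g(\mathbf{0})_\sfA\|_2^2 + \|\nabla g(\mathbf{0})_\sfB\|_2^2 = \|\nabla g(\mathbf{0})_\sfS\|_2^2$. Adding the two RSM bounds and dividing by the RSC bound therefore gives
\begin{equation*}
\frac{f(\sfA)+f(\sfB)}{f(\sfS)} \;\geq\; \frac{\tfrac{1}{2L}\|\nabla g(\mathbf{0})_\sfS\|_2^2}{\tfrac{1}{2m}\|\nabla g(\mathbf{0})_\sfS\|_2^2} \;=\; \frac{m}{L},
\end{equation*}
which is the desired bound on $\nu_\sfS$.

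The only real obstacle is identifying the right ``common observable''. One might initially try to expand $g$ around $\bx^\star_\sfS$ and use that its gradient restricted to $\sfS$ vanishes, but that route must control $\|\bx^\star_\sfS\|_2^2$ and yields a weaker (possibly vacuous) bound of the form $2-L/m$. Expanding instead around $\mathbf{0}$ and passing through the closed-form maximizers of the quadratic surrogates bypasses this issue cleanly, and it also mirrors the strategy used by Elenberg et al.\ for the submodularity ratio (Lemma~\ref{lem:elenberg}), which is a reassuring consistency check. The argument requires no assumption on the sizes of $\sfA$ and $\sfB$ beyond disjointness, so the uniform version over all bipartitions of $\sfS$ follows immediately.
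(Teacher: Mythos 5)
Your proposal is correct and follows essentially the same route as the paper: it establishes the smoothness-based lower bound $f(\sfT) \geq \frac{1}{2L}\|\nabla g(\mathbf{0})_\sfT\|_2^2$ and the concavity-based upper bound $f(\sfS) \leq \frac{1}{2m}\|\nabla g(\mathbf{0})_\sfS\|_2^2$ (the paper's Lemmas~\ref{lem:rsclem1} and~\ref{lem:rsclem2}), then combines them via the orthogonal splitting of $\|\nabla g(\mathbf{0})_\sfS\|_2^2$ over the disjoint parts $\sfA$ and $\sfB$. No gaps; the argument matches the paper's proof step for step.
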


\begin{proof}
To prove Theorem~\ref{thm:RSCsubadd}, we make use of the following two results. Recall that for a set $\sfS$ and vector $\bu$, $\bu_\sfS$ denotes the subvector of $\bu$ supported on $\sfS$. 

\begin{lemma}
\label{lem:rsclem1}
For a support set $\sfS \subset [d]$, $f(\sfS) \geq \frac{1}{2 L} \| \nabla g(\mathbf{0})_\sfS\|_F^2$.
\end{lemma}

\begin{lemma}
\label{lem:rsclem2}
For any support set $\sfS \subset [d]$,  $f(\sfS) \leq \frac{1}{2 m} \|\nabla g(\mathbf{0})_\sfS \|_F^2$.
\end{lemma}

Let $\sfA, \sfB$ be a partition of a given support set $\sfS \subset [d]$ i.e $\sfA \cup \sfB = \sfS$, $\sfA \cap \sfB = \{\}$. 

We can use Lemma~\ref{lem:rsclem1} to lower bound the numerator of the subadditivity ratio as follows:

 \begin{eqnarray}\nonumber
 f(\sfA) + f(\sfB) &\geq & \frac{1}{2 L} \left( \| \nabla g(\mathbf{0})_\sfA\|_F^2 + \| \nabla g(\mathbf{0})_\sfB\|_F^2  \right) \\ \label{eqnproof:rscthm1}
 &=&  \frac{1}{2 L}  \| \nabla g(\mathbf{0})_\sfS\|_F^2.
  \end{eqnarray} 

Combining~\eqref{eqnproof:rscthm1} with Lemma~\ref{lem:rsclem2}, we get,
\vspace{-2mm}
\begin{eqnarray*}
\nu_\sfS= \frac{f(\sfA) + f(\sfB)}{f(\sfS) } \geq  \frac{m}{L}.\qedhere
\end{eqnarray*}
\end{proof}

Since we have a bound for the subadditivity ratio for general strongly concave and smooth functions, we can now provide a novel approximation guarantee for support selection by $\distgreedy$.

\begin{corollary}
\label{cor:generalfunctions_distributed}
Say the function $ g(\bx):\bbR^d \rightarrow \bbR$ is $m$-strongly concave over all $2k$ sparse support sets and $L$-smooth over all $k+1$ sparse support sets. Let $\sfA^\star$ be the optimal support set that maximizes the sparsity constrained $g(\cdot)$~\eqref{eqn:supportSelectionf}. Let $\sfA_{dg}$ be the solution set returned by $\distgreedy$. Then, 
\begin{equation*}
\bbE[f(\sfA_{dg})] \geq  \frac{m}{2L}    \left(1 - e^{- \nicefrac{m}{L} }\right) f(\sfA^*) .
\end{equation*}
\end{corollary}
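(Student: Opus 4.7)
The plan is to assemble the corollary by plugging the two structural lemmas of this section into the master bound of Theorem~\ref{thm:distributedgreedy}. Concretely, Theorem~\ref{thm:distributedgreedy} applied to the set function $f(\cdot)$ defined in~\eqref{eqn:generalSetFunction} gives
\begin{equation*}
\bbE[f(\sfA_{dg})] \;\geq\; \frac{\nu_k}{2}\bigl(1 - e^{-\gamma}\bigr)\, f(\sfA^\star),
\end{equation*}
where $\gamma = \min\{\gamma_{\sfG_i,k},\gamma_{\sfG,k}\}$ ranges over the local greedy solutions $\sfG_i$ and the aggregated greedy solution $\sfG$, each of size $k$. So the remaining work is purely to lower bound $\gamma$ and $\nu_k$ by $m/L$ under the stated RSC/RSM assumptions.

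For the submodularity ratio, I would invoke Lemma~\ref{lem:elenberg} with $\sfS\in\{\sfG_i,\sfG\}$; the lemma requires $m$-strong concavity on supports of size $|\sfS|+k$ and $L$-smoothness on supports of size $|\sfS|+1$. Since $|\sfS|=k$, these become exactly the $2k$-sparse RSC and $(k{+}1)$-sparse RSM hypotheses of the corollary, yielding $\gamma_{\sfG_i,k},\gamma_{\sfG,k}\geq m/L$ and hence $\gamma\geq m/L$.

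For the subadditivity ratio, I would invoke Theorem~\ref{thm:RSCsubadd} uniformly over all $k$-sized supports $\sfS$. The theorem only requires $m$-strong concavity and $L$-smoothness over vectors supported on $\sfS$, which is again implied by the $2k$-sparse RSC and $(k{+}1)$-sparse RSM hypotheses (any $k$-sparse support is in particular $2k$-sparse and $(k{+}1)$-sparse). This gives $\nu_{\sfS}\geq m/L$ for every $\sfS$ of size $k$, and therefore $\nu_k\geq m/L$. Substituting these two bounds into the displayed inequality above yields exactly
\begin{equation*}
\bbE[f(\sfA_{dg})] \;\geq\; \frac{m}{2L}\bigl(1 - e^{-m/L}\bigr)\, f(\sfA^\star).
\end{equation*}

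There is no real analytic obstacle here, since both key ingredients (the distributed bound and the RSC-to-weak-subadditivity bound) are already proved earlier; the only subtlety is bookkeeping on sparsity levels, namely confirming that $|\sfG_i|=|\sfG|=k$ together with $|\sfS|\leq k$ in the definition of $\gamma_{\cdot,k}$ brings the total support size considered in Lemma~\ref{lem:elenberg} to at most $2k$, which is precisely why the hypothesis is stated at that level.
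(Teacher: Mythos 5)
Your proof is correct and matches the paper's (implicit) argument exactly: the corollary is obtained by substituting the bound $\gamma\geq m/L$ from Lemma~\ref{lem:elenberg} and the bound $\nu_k\geq m/L$ from Theorem~\ref{thm:RSCsubadd} into Theorem~\ref{thm:distributedgreedy}, with the sparsity-level bookkeeping you describe. The paper does not spell this out further, so there is nothing to add.
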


To the best of our knowledge, the bounds obtained in Corollary~\ref{cor:generalfunctions_distributed} are the first for a distributed algorithm for support selection for general functions. Note that we have taken a uniform bound for restricted strong concavity and smoothness to be over all $k$ sized support sets, though it is only required to be over the optimal support set.

\section{Experiments}

\begin{figure}[ht]
\centering
\begin{subfigure}{0.45\columnwidth}
\includegraphics[width=\textwidth]{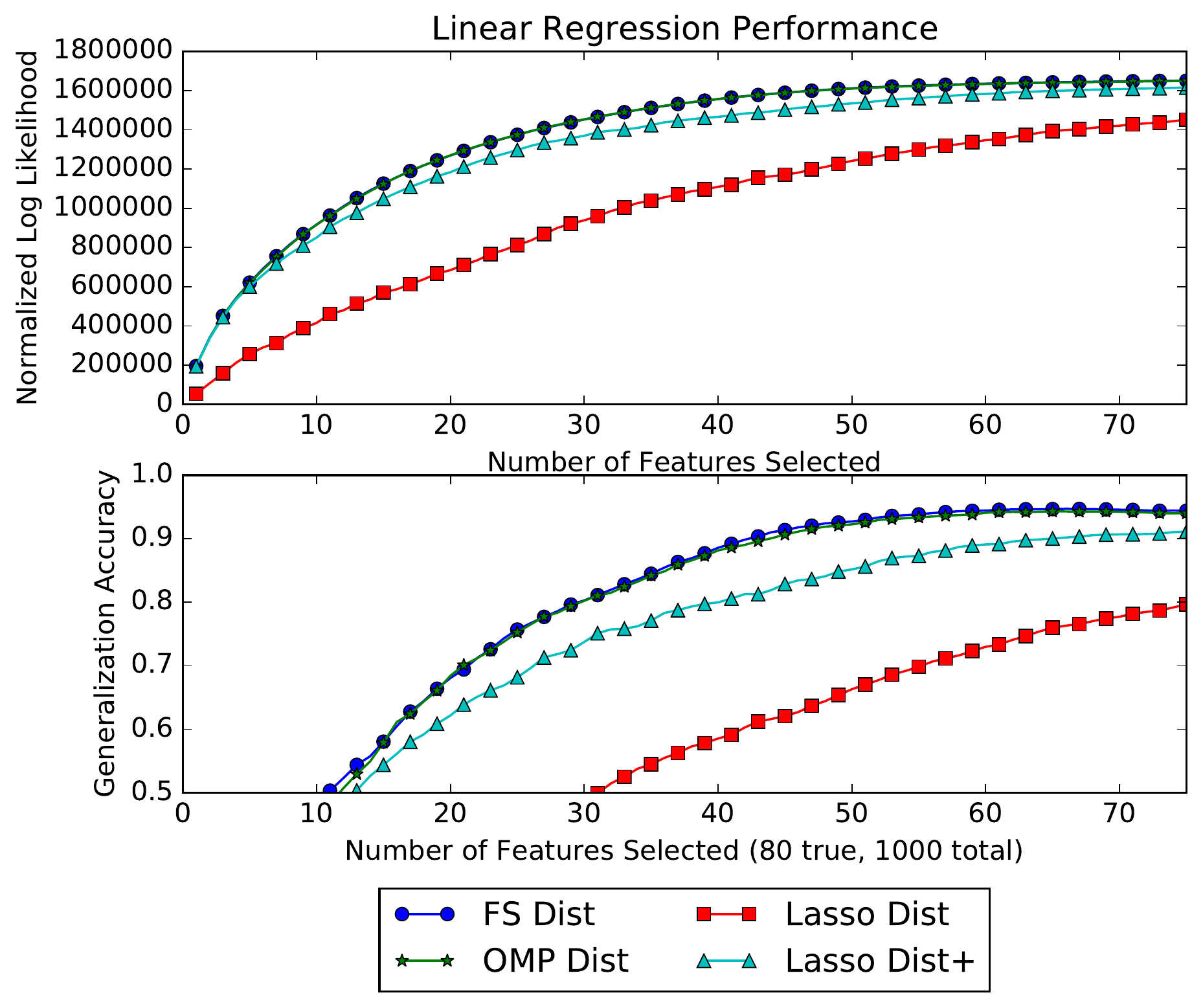}
\caption{}\label{fig:synthetic2ll}
\end{subfigure}
\begin{subfigure}{0.45\columnwidth}
\includegraphics[width=\textwidth]{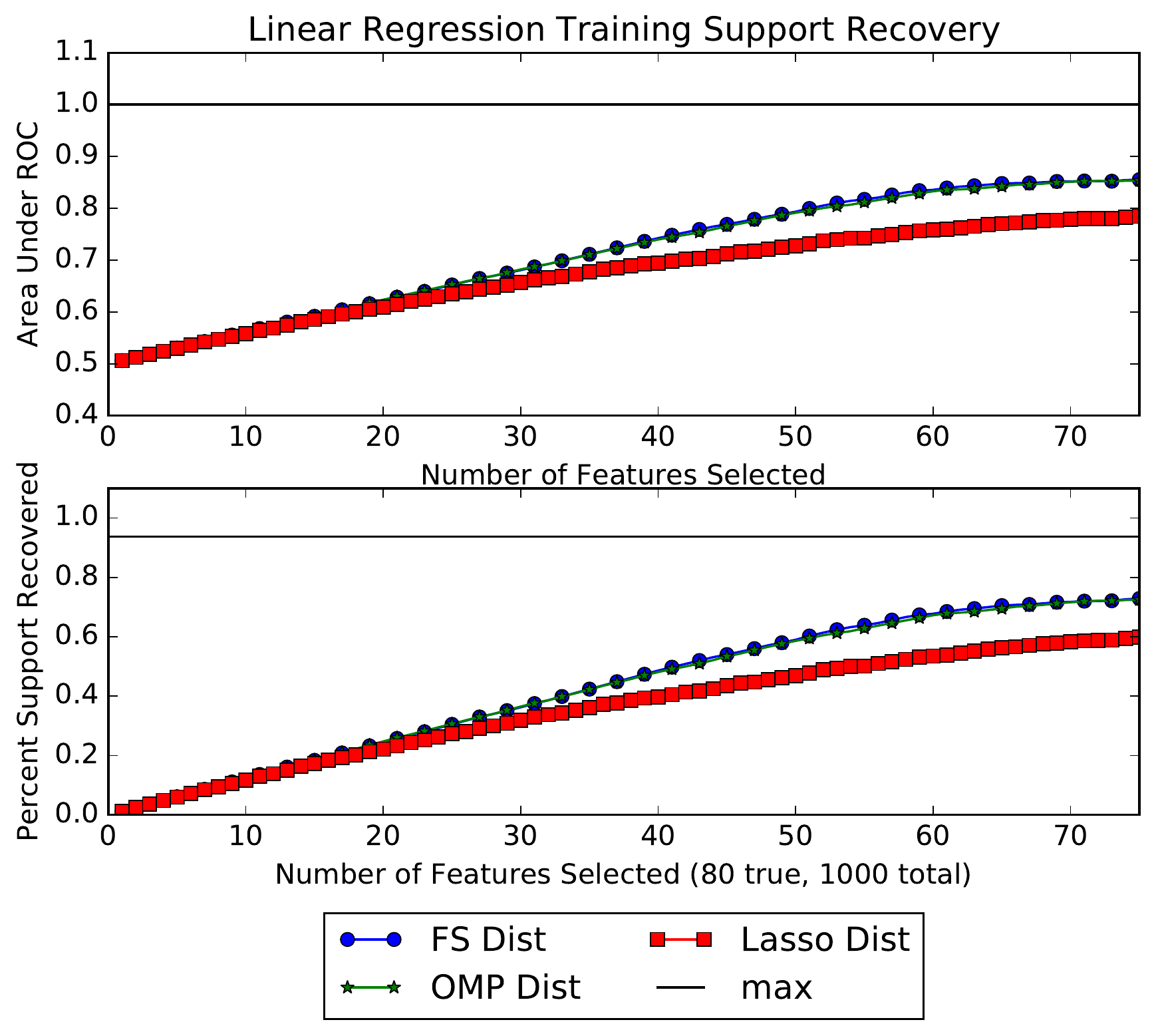}
\caption{}\label{fig:synthetic2supp}
\end{subfigure}
\caption{\label{fig:synthetic2} Distributed linear regression, $l=10$ partitions, $n=800$ training and test samples, $\alpha=0.5$. Results averaged over $10$ iterations. Both greedy algorithms outperform $\ell_1$ regularization.}
\end{figure}

\begin{figure}[ht]
\centering
\includegraphics[width=0.7\columnwidth,clip=true,trim= 0 0 0 0.29in]{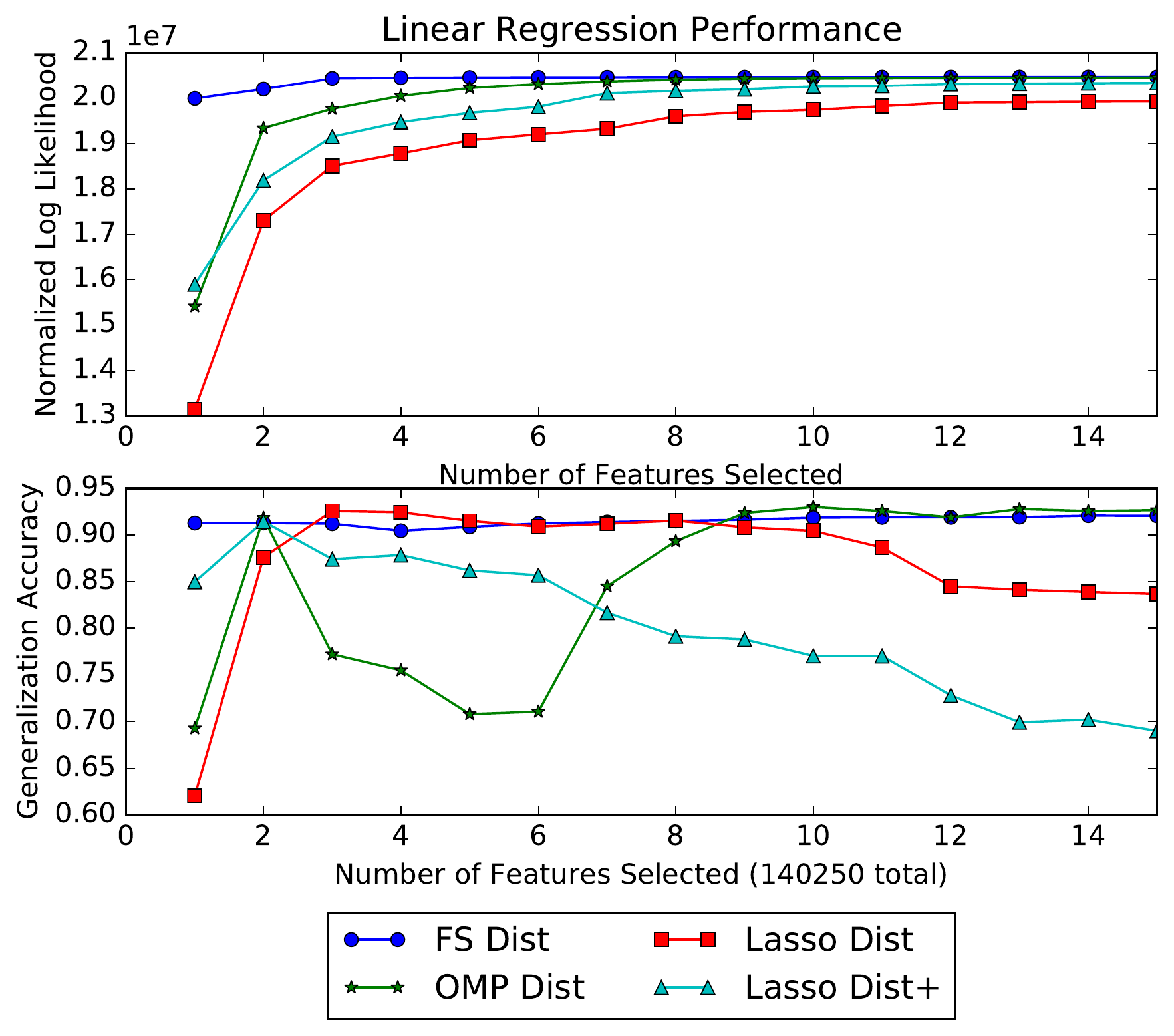}
\caption{Distributed linear regression, Electricity dataset. }\label{fig:elec1}
\end{figure}

\subsection{Distributed Linear Regression}
We consider sparse linear regression in a distributed setting. We generate a $100$-sparse regression vector is generated by selecting random nonzero entries of $\bbeta$,
\begin{align*}
\bbeta_{s} = (-1)^{\operatorname{Bern}(\nicefrac{1}{2})} \times \left(5\sqrt{\frac{\log d}{n}} + \delta_{s} \right),  
\end{align*}
 where $\delta_{s}$ is a standard i.i.d. Gaussian. Measurements $\by$ are taken according to $\by = \bX\bbeta + \bz$, where $\forall  i \in [n], z_i$ is i.i.d. Gaussian with variance set to be $0.01 \normbs{\bX \bbeta}$. Each row of the design matrix $\bX$ is generated by an autoregressive process,
\begin{align*}
X_{n,t+1} = \sqrt{1 - \alpha^2}X_{n,t} + \epsilon_{n,t},
\end{align*}
where $\epsilon_{n,t}$ is i.i.d. Gaussian with variance $\alpha^2=0.25$. 
We take $n=800$ for the number of both training and test measurements. Results are averaged over $10$ iterations, each with a different partition $\{\sfA_j\}$ of the $1000$ features.

We evaluate four variants of $\distgreedy$. The two greedy algorithms are $\greedy$ Forward Selection (FS) and Orthogonal Matching Pursuit (OMP). Lasso sweeps an $\ell_1$ regularization parameter $\lambda$ using LARS \citep{EfronLARS}. This produces nested subsets of features corresponding to a sequence of thresholds for which the support size increases by $1$. Lasso uses this threshold, while Lasso+ fits an unregularized linear regression on the support set selected by Lasso.

Figure~\ref{fig:synthetic2} shows the performance of all algorithms on the following metrics: log likelihood (normalized with respect to a null model), generalization to new test measurements from the same true support parameter, area under ROC, and percentage of the true support recovered for $l=10$.


\begin{figure}[h]
\centering
\includegraphics[width=0.7\columnwidth]{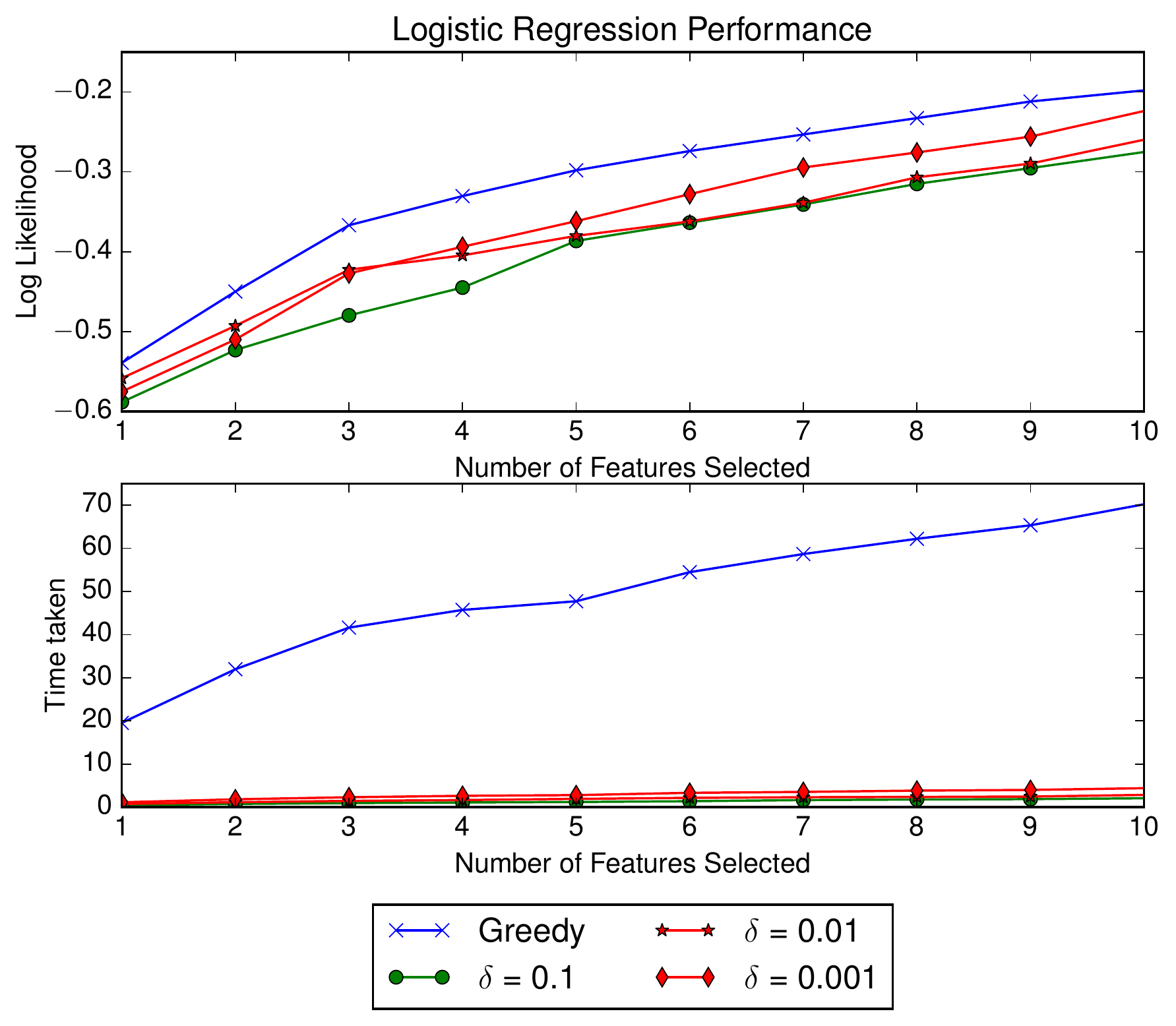}
\caption{\label{fig:lazy} Trade off in time vs log likelihood for various values of $\delta$-Stochastic Greedy as opposed to Greedy Forward Selection for logistic regression on \texttt{gisette} data~\citep{UCI}. Results averaged over $10$ iterations.}
\end{figure}

Next, we run a similar experiment on a large, real-world dataset. We sample $d=140,\!250$ time series measurements across $n=370$ customers from the \texttt{ElectricityLoadDiagrams} time series dataset \citep{UCI}. We consider the supervised learning experiment of predicting the electrical load at the \emph{next} time $140,\!251$. We use half of the customers for training and the rest for testing. Figure \ref{fig:elec1} shows performance of the same algorithms with data distributed across $l=50$ partitions to select the top $k=15$ features. We see that distributed Forward Selection produces both largest likelihood and highest generalization score. OMP has second largest likelihood, but its generalization varies widely for different values of $k$. This is likely due to the random placement of predictive features across a large number of partitions.

\subsection{Stochastic Sparse Logistic Regression}
In this section we demonstrate the applicability of Algorithm~\ref{algo:stochasticGreedy} for greedy support selection for sparse logistic regression. Note that the respective set function~\eqref{eqn:generalSetFunction} when $g(\cdot)$ is the log likelihood for logistic regression is not submodular. As such one would not typically apply the $\stogreedy$ algorithm for sparse logistic regression. However, the guarantees obtained in Section~\ref{sec:generalfunctions} suggest good practical performance which is indeed demonstrated in Figure~\ref{fig:lazy}. For the experiment we use the \texttt{gisette} dataset obtained from the UCI website~\citep{UCI}. The dataset is of a handwritten digit recognition problem to separate out digits `$4$' and `$9$'. It has 13500 instances, and 5000 features. Figure~\ref{fig:lazy} illustrates depicts the tradeoff between the time taken to learn the model and the respective training log likelihood for different values of $\delta$ as used in Algorithm~\ref{algo:stochasticGreedy}. As shown, we obtain tremendous speed up with relatively little loss in the log likelihood value even for reasonably large $\delta$ values. 

\section{Conclusion}
We provided novel bounds for two greedy algorithm variants for maximizing weakly submodular functions with applications to linear regression and general concave functions. Our research opens questions on what other known algorithms with provable guarantees for submodular functions can be extended to weakly submodular functions.

\section*{Acknowledgement}
Research supported by NSF Grants CCF 1344179, 1344364, 1407278, 1422549, IIS 1421729, and ARO YIP W911NF-14-1-0258.
\newpage\pagebreak

\bibliographystyle{IEEEtranN}
\bibliography{biblio}

\newpage\pagebreak

\appendix
\section{Proofs}
In this section, we provide detailed proofs of all the results used in this manuscript. For lemma and theorem statements repeated from the main text, we add an apostrophe to indicate that it is not a new lemma/theorem being introduced. 

We make use of the following result that provides an approximation bound for greedy selections for weakly submodular functions. 

\begin{lemma}\label{lem:DasKempeGreedyGuarantee}\citep{Kempe:2011ue})  Let $\sfS^\star$ be the optimal $k$-sized set that maximizes $f(\cdot)$ under the $k$-sparsity constraint (see~\eqref{eq:maxf}). Let $\sfS_G$ be the set returned by greedy forward selection (Algorithm~\ref{algo:greedy}), then 
\begin{equation*}
f(\sfS_G) \geq (1 - \exp( - \gamma_{\sfS_G,k}) f(\sfS^\star).
\end{equation*}
\end{lemma}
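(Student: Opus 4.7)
The plan is to follow the classical Nemhauser-style telescoping argument, but with the submodularity ratio playing the role of the constant $1$ that plain submodularity would otherwise give. Write $\sfG_i$ for the set after $i$ iterations of \greedy, so $\sfG_0 = \emptyset$ and $\sfG_k = \sfS_G$, and set $\gamma := \gamma_{\sfS_G, k}$. I will assume the customary normalization $f(\emptyset) = 0$.

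First I would lower-bound the $(i{+}1)$-st greedy marginal gain by the average marginal gain over the unselected part of $\sfS^\star$. Since the greedy rule picks the $\argmax$ of $f(\sfG_i \cup \{j\}) - f(\sfG_i)$ over $j \notin \sfG_i$,
\[
f(\sfG_{i+1}) - f(\sfG_i) \;\geq\; \frac{1}{|\sfS^\star \setminus \sfG_i|}\sum_{j \in \sfS^\star \setminus \sfG_i}\bigl[f(\sfG_i \cup \{j\}) - f(\sfG_i)\bigr].
\]
Applying Definition~\ref{def:submodularityRatio} with $\sfL = \sfG_i$ and $\sfS = \sfS^\star \setminus \sfG_i$, the right-hand sum is at least $\gamma_{\sfG_i, \sfS^\star \setminus \sfG_i}\bigl(f(\sfG_i \cup \sfS^\star) - f(\sfG_i)\bigr)$, and monotonicity of $f$ gives $f(\sfG_i \cup \sfS^\star) \geq f(\sfS^\star)$. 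Combining with $|\sfS^\star \setminus \sfG_i| \leq k$ and the easy bound $\gamma_{\sfG_i, \sfS^\star \setminus \sfG_i} \geq \gamma_{\sfG_i, k}$, this yields the per-step recursion $f(\sfG_{i+1}) - f(\sfG_i) \geq (\gamma_{\sfG_i, k}/k)(f(\sfS^\star) - f(\sfG_i))$.

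The delicate ingredient is replacing the step-dependent $\gamma_{\sfG_i, k}$ by the uniform $\gamma = \gamma_{\sfS_G, k}$ that appears in the statement. Inspecting Definition~\ref{def:submodularityRatio}, $\gamma_{\sfU, k}$ is a minimum over pairs $(\sfL, \sfS)$ with $\sfL \subseteq \sfU$, so enlarging $\sfU$ enlarges the feasible set and can only decrease the minimum. Since $\sfG_i \subseteq \sfS_G$ at every step, this gives $\gamma_{\sfG_i, k} \geq \gamma$ uniformly in $i$. This monotonicity check is the one non-routine piece of the argument; everything else is essentially bookkeeping built on the definition.

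With this in hand, setting $\Delta_i := f(\sfS^\star) - f(\sfG_i)$ turns the per-step inequality into $\Delta_{i+1} \leq (1 - \gamma/k)\Delta_i$. Iterating $k$ times and using $(1 - \gamma/k)^k \leq e^{-\gamma}$ together with $\Delta_0 \leq f(\sfS^\star)$ yields $\Delta_k \leq e^{-\gamma} f(\sfS^\star)$, i.e.\ $f(\sfS_G) \geq (1 - e^{-\gamma_{\sfS_G, k}}) f(\sfS^\star)$, as claimed. The degenerate case $\sfS^\star \subseteq \sfG_i$ makes the recursion trivially true (the right-hand side is non-positive), so no edge-case surgery is needed.
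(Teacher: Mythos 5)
Your proof is correct. The paper does not actually prove this lemma---it imports it directly from \citet{Kempe:2011ue}---and your argument is a faithful reconstruction of that original proof: the greedy max-exceeds-average bound, the submodularity-ratio substitute for the submodular per-step inequality, the monotonicity of $\gamma_{\sfU,k}$ under enlarging $\sfU$ (which justifies the uniform $\gamma_{\sfS_G,k}$), and the telescoping $(1-\gamma/k)^k \leq e^{-\gamma}$ step, which is also the same recursion scheme the paper itself deploys in the proofs of Theorem~\ref{thm:stochasticgreedy} and Lemma~\ref{lem:overallgreedyonpartitions}.
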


\subsection{Distributed Greedy}

\begin{replemma}{lem:individualgood}
$f(\sfG_j) \geq (1 - \exp (- \gamma_{\sfG_j,k} ))  f( \sfT_j ) $.
\end{replemma}
\begin{proof}
From Lemma~\ref{lem:betanice}, we know that running greedy on $\sfA_j \cup \sfT_j $ instead of $\sfA_j$ will still return the set $\sfG_j$, 
\begin{eqnarray*}
f(\sfG_j) &\overset{Lemma~\ref{lem:DasKempeGreedyGuarantee}}{\geq} & (1 - \exp (- \gamma_{\sfG_j,k} )) \max_{ \substack{ | \sfS | \leq k \\ \sfS \subset \sfA_j \cup \sfT_j } }  f(\sfS) \\
 &\geq &  (1 - \exp (- \gamma_{\sfG_j,k} ))  f( \sfT_j ) .
\end{eqnarray*}
\end{proof}

For proving Lemma~\ref{lem:overallgreedyonpartitions}, we require another auxillary result. 

\begin{lemma}
\label{lem:probabilityOfPartitoning}
For any $x \in  \sfA, \bbP (x \in \cup_j  \sfG_j) = \frac{1}{l} \sum_j \bbP(x \in \sfS_j)$.
\end{lemma}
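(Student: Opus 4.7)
The approach is to exploit the fact that, under independent uniform assignment of each element of $\sfA$ to one of the $l$ partitions, the unconditional distribution of $\sfA_j \cup \{x\}$ coincides with the conditional distribution of $\sfA_j$ given that $x$ lands in partition $j$. With this coupling in hand, both sides of the claimed identity follow by direct computation.

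First, I would make the random partitioning explicit by introducing $\sigma:\sfA\to[l]$ with $\sigma(y)$ independent and uniform over $[l]$ for each $y\in\sfA$, so that $\sfA_j=\sigma^{-1}(j)$. Since each element belongs to exactly one partition, the event $x\in\cup_j \sfG_j$ is the same as $x\in \sfG_{\sigma(x)}$. Conditioning on the value of $\sigma(x)$ and using $\bbP(\sigma(x)=j)=1/l$ gives
\begin{align*}
\bbP\bigl(x\in\cup_j\sfG_j\bigr) \;=\; \sum_{j=1}^{l}\frac{1}{l}\,\bbP\bigl(x\in\greedy(\sfA_j,k)\,\bigm|\,\sigma(x)=j\bigr).
\end{align*}

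Next, because the assignments of distinct elements are independent, the random set $\sfA_j\setminus\{x\}$ has the same distribution whether or not we condition on $\sigma(x)=j$. Hence the conditional distribution of $\sfA_j$ given $\sigma(x)=j$ equals the unconditional distribution of $\sfA_j\cup\{x\}$, and therefore
\begin{align*}
\bbP\bigl(x\in\greedy(\sfA_j,k)\,\bigm|\,\sigma(x)=j\bigr) \;=\; \bbP\bigl(x\in\greedy(\sfA_j\cup\{x\},k)\bigr) \;=\; \bbP(x\in\sfS_j),
\end{align*}
where the last equality invokes the natural extension of the definition of $\sfS_j$ to arbitrary $x\in\sfA$ (the original definition restricts attention to $x\in\sfA^*$, but the same formula makes sense for any element, and only this extension is needed). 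Summing over $j$ yields the claim.

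The only real obstacle, modest as it is, is the coupling step: one must specify the random-partition model carefully and verify that deterministically inserting $x$ into $\sfA_j$ produces the same joint distribution as conditioning on $\sigma(x)=j$. Once that is done, no appeal to submodularity, weak submodularity, or the internals of $\greedy$ (in particular Lemma~\ref{lem:betanice}) is needed; the identity is purely a statement about the partitioning distribution combined with symmetry across the $l$ machines.
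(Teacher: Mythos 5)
Your proof is correct and follows essentially the same route as the paper's: decompose the event $x\in\cup_j\sfG_j$ according to which machine $x$ lands on, use $\bbP(x\in\sfA_j)=1/l$, and identify $\bbP\bigl(x\in\greedy(\sfA_j,k)\mid x\in\sfA_j\bigr)$ with $\bbP(x\in\sfS_j)$. The only difference is that you make explicit the coupling argument (conditioning on $\sigma(x)=j$ versus deterministically inserting $x$ into $\sfA_j$) that the paper's proof leaves implicit in that last identification.
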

\begin{proof}
We have \begin{align*}
&\bbP (x \in \cup_j  \sfG_j) \\
 &=  \sum_j \bbP ( x \in \sfA_i \cap x \in \greedy(\sfA_i,k)) \\
&= \sum_j \bbP(x \in \sfA_i) \bbP(x \in \greedy(\sfA_i,k) | x \in \sfA_i)\\
&= \sum_j \bbP(x \in \sfA_i) \bbP(x \in \sfS_i)\\
& = \frac{1}{l}\bbP(x \in \sfS_i).
\end{align*}

\end{proof}

We now prove Lemma~\ref{lem:overallgreedyonpartitions}. 

\begin{replemma}{lem:overallgreedyonpartitions}
$\exists j \in [l]$, $\bbE[ f(\sfG) ] \geq   \left( 1 - \frac{1}{e^{\gamma_{\sfG,k}}}\right) f(\sfS_j)$.
\end{replemma}

\begin{proof}

For $i \in [k]$, let $\sfB_i : \greedy( \cup_j \sfG_j, i)$, so that $\sfB_k = \sfG$ in step 3 of Algorithm~\ref{algo:distributedgreedy}.  Then,

\begin{align}\nonumber
&\bbE[ f(\sfB_{i+1}) - f(\sfB_i) ] \\\nonumber
&\geq \frac{1}{k} \sum_{x \in \sfA^*} \bbP (x \in \cup_j  \sfG_j) \bbE\left[    f(\sfB_i \cup x) - f(\sfB_i)  \right]\\\label{tempeqn:stepLemmaoverallgreedy}
 & \overset{Lemma~\ref{lem:probabilityOfPartitoning}}{=} \frac{1}{kl} \sum_{x \in \sfA^*} \left( \sum_{j=1}^l \bbP(x \in \sfS_i)\right) \bbE(f(\sfB_i \cup x) - f(\sfB_i))\\\nonumber
& =  \frac{1}{kl} \sum_{j=1}^l \sum_{x \in \sfS_j}  \bbE(f(\sfB_i \cup x) - f(\sfB_i))\\\nonumber
& = \frac{1}{kl} \sum_{j=1}^l \gamma_{\sfB_i, \sfS_j\backslash \sfB_i} \bbE(f(\sfB_i \cup \sfS_j) - f(\sfB_i)) \\ \nonumber
& \geq \frac{1}{kl} \sum_{j=1}^l \gamma_{\sfB_i, \sfS_j\backslash \sfB_i}\bbE (f( \sfS_j) - f(\sfB_i))\\\nonumber
 & \geq \frac{\gamma_{\sfB_i, k}}{k} \min_{j}  \bbE(f( \sfS_j) - f(\sfB_i)).
\end{align}

Using $\gamma_{\sfB_i, k} \geq \gamma_{\sfG, k}$, and proceeding now as in the proof of Theorem~\ref{thm:stochasticgreedy}, we get the desired result.

\end{proof}

\subsection{Stochastic Greedy}
\begin{replemma}{lem:randomsubset}
Let $\sfA, \sfB \subset [n]$, with $| \sfB| \leq k$. Consider another set $\sfC$ drawn randomly from $[n]\backslash \sfA$ with $|\sfC| = \lceil \frac{n\log \nicefrac{1}{\delta}}{k} \rceil$. Then 
\begin{align*}
\bbE[ \max_{v \in \sfC} f( v \cup \sfA) - f(\sfA)] \geq \frac{(1 -\delta) \gamma_{\sfA,\sfB\backslash \sfA}}{k} (f(\sfB) - f(\sfA)).
\end{align*}
\end{replemma}
\begin{proof}
To relate the best possible marginal gain from $\sfC$ to the total gain of including the set $\sfB\backslash \sfA$ into $\sfA$, we must upper bound the probability of overlap between $\sfC$ and $\sfB\backslash \sfA$ as follows:

\begin{eqnarray}\nonumber
\bbP(\sfC \cap (\sfB\backslash \sfA) \neq  \emptyset) =& 1-  \left( 1 - \frac{ |\sfB\backslash \sfA|}{| [n]\backslash \sfA|} \right)^{|\sfC|} \\\nonumber
=& 1 - \left( 1 - \frac{ |\sfB\backslash \sfA|}{| [n]\backslash \sfA|} \right)^{ \lceil \frac{n\log \nicefrac{1}{\delta}}{k} \rceil}\\\nonumber
\geq& 1 - \exp{ \left( - \frac{n\log \nicefrac{1}{\delta}}{k}  \frac{ |\sfB\backslash \sfA|}{| [n]\backslash \sfA|} \right) }\\\nonumber
\geq& 1 - \exp{ \left(- \frac{ |\sfB\backslash \sfA|\log \nicefrac{1}{\delta}}{k}\right)}\\\label{eqnproof:overlap}
\geq & \left(1 - \exp \left(- \log \nicefrac{1}{\delta}\right)\right) \frac{ |\sfB\backslash \sfA|}{k} \\ \nonumber
= & ( 1 - \delta ) \frac{ |\sfB\backslash \sfA|}{k},
\end{eqnarray}

where~\eqref{eqnproof:overlap} is because $\frac{|\sfB\backslash \sfA|}{k} \leq 1$. Let $\sfS =  \sfC \cap (\sfB\backslash \sfA)$. Since $f(v \cup \sfA) - f(\sfA)$ is 
nonnegative,
\begin{align*}
\nonumber
&\bbE[ \max_{v \in \sfC} f( v \cup \sfA) - f(\sfA)]    \\ \nonumber
&\geq\bbP (\sfS\neq  \emptyset)\bbE[  \max_{v \in \sfC} f( v \cup \sfA) - f(\sfA) |\sfS\neq  \emptyset]  \\\nonumber 
&\geq  ( 1 - \delta ) \frac{ |\sfB\backslash \sfA|}{k} \bbE[  \max_{v \in \sfC} f( v \cup \sfB) - f(\sfB) |\sfS\neq  \emptyset] \\\nonumber
&\geq  ( 1 - \delta ) \frac{ |\sfB\backslash \sfA|}{k} \bbE[  \max_{v \in \sfC  \cap (\sfB\backslash \sfA)} f( v \cup \sfA) - f(\sfA) |\sfS\neq  \emptyset] \\\nonumber
&\geq  ( 1 - \delta ) \frac{ |\sfB\backslash \sfA|}{k} \sum_{v \in \sfB\backslash \sfA} \frac {f(v \cup \sfA) - f(\sfA)}{ |\sfB\backslash \sfA| } \\ \nonumber
&\geq  \frac{ ( 1 - \delta ) \gamma_{\sfA,\sfB\backslash \sfA} }{k} (f(\sfB) - f(\sfA)).
\end{align*}
\end{proof}

\subsection{Linear regression}

\begin{replemma}{lem:subdadditivityRegression}
For the maximization of the $R^2$~\eqref{eq:linearRegressionf} $\nu_\sfS \geq \frac{\lambda_{\min}(\bC_\sfS)}{ \lambda_{\max(\bC_\sfS)}  }$, where $\bC_\sfS$ is the submatrix of $\bC$ with rows and columns indexed by $\sfS$.
\end{replemma}
\begin{proof}
Say $\sfA$, $\sfB$ is an arbitrary partition of $\sfS$. Consider, 

\begin{eqnarray}
\| \bP_\sfA \by \|_2^2 &= & \by^\top \bP_\sfA \by \label{tempeqn:stepprojection}\\ \nonumber
& = & \by^\top \bX_\sfA (\bX_\sfA^\top \bX_\sfA)^{-1} \bX_\sfA^\top \by\\\nonumber
& \geq & \| \bX_\sfA^\top \by\|_2^2 \lambda_{\text{min}}\left((\bX_\sfA^\top \bX_\sfA)^{-1}\right)\\\nonumber
&= & \| \bX_\sfA^\top \by\|_2^2 \frac{1}{\lambda_{\text{max}}\left(\bX_\sfA^\top \bX_\sfA\right)}\\\nonumber
&\geq &  \| \bX_\sfA^\top \by\|_2^2 \frac{1}{\lambda_{\text{max}}\left(\bX_\sfS^\top \bX_\sfS\right)}.
\end{eqnarray}

where~\eqref{tempeqn:stepprojection} results from the fact that all orthogonal projection matrices are symmetric and idempotent. Repeating a similar analysis for $\sfB$ instead of $\sfA$, we get 
\begin{eqnarray*}
\| \bP_\sfA \by \|_2^2 + \| \bP_\sfB \by \|_2^2 &\geq &  \frac{\| \bX_\sfA^\top \by\|_2^2 + \| \bX_\sfB^\top \by\|_2^2}{\lambda_{\text{max}}\left(\bX_\sfS^\top \bX_\sfS\right)} \\
&= &  \frac{\|\bX^\top_\sfS \by\|^2_2}{\lambda_{\text{max}}\left(\bX_\sfS^\top \bX_\sfS\right)}.
\end{eqnarray*}

A similar analysis also gives $\| \bP_\sfS \by \|_2^2 \leq \frac{\| \bX_\sfS \by\|_2^2}{ \lambda_{\text{min}} \left(\bX_\sfS^\top \bX_\sfS\right)}$, which gives the desired result.
\end{proof}

\subsection{RSC implies weak subadditivity}

Let $\bbeta^{(\sfS)} := \max_{\text{supp}(\bx) \in \sfS} g(\bx) $.

\begin{replemma}{lem:rsclem1}
For a support set $\sfS \subset [d]$, $f(\sfS) \geq \frac{1}{2 L} \| \nabla g(\mathbf{0})_\sfS\|_2^2$.
\end{replemma}
\begin{proof}
For any $\bv$ with support in $\sfS$,
\begin{align*}
g(\bbeta^{(\sfS)})  - g(\mathbf{0}) & \geq g(\bv) - g(\mathbf{0}) \\
&\geq \inprod{\nabla g(\mathbf{0})}{\bv} - \frac{L}{2} \| \bv\|_2^2.
\end{align*}

Using $\bv = \frac{1}{L}\nabla g(\mathbf{0})_\sfS$, we get the desired result.  

\end{proof}
\begin{replemma}{lem:rsclem2}
For any support set $\sfS$,  $f(\sfS) \leq \frac{1}{2 m} \|  \nabla g(\mathbf{0})_\sfS \|_2^2$.
\end{replemma}
\begin{proof}
By strong concavity, 
\begin{align*}
g(\bbeta^{(\sfS)}) - g(\mathbf{0}) & \leq \inprod{ \nabla g(\mathbf{0}) }{\bbeta^{(\sfS)} } - \frac{m}{2} ||\bbeta^{(\sfS)}||_2^2 \\
& \leq \max_{\bv} \inprod{ \nabla g(\mathbf{0}) }{ \bv)}  -  \frac{m}{2} \| \bv \|_F^2,
\end{align*}

where $\bv $ is an arbitrary vector that has support only on $\sfS$. Optimizing the RHS over $\bv$ gives the desired result. 
\end{proof}

\section{Additional experiments}
Figure~\ref{fig:synthetic1} shows the performance of all algorithms on the following metrics: log likelihood (normalized with respect to a null model), generalization to new test measurements from the same true support parameter, area under ROC, and percentage of the true support recovered for $l=2$. Recall that Figure~\ref{fig:synthetic2} presents the results from the same experiment with $l=10$. Clearly, the greedy algorithms benefit more from increased number of partitions.

\begin{figure}[ht]
\centering
\begin{subfigure}{0.45\columnwidth}
\includegraphics[scale=0.4]{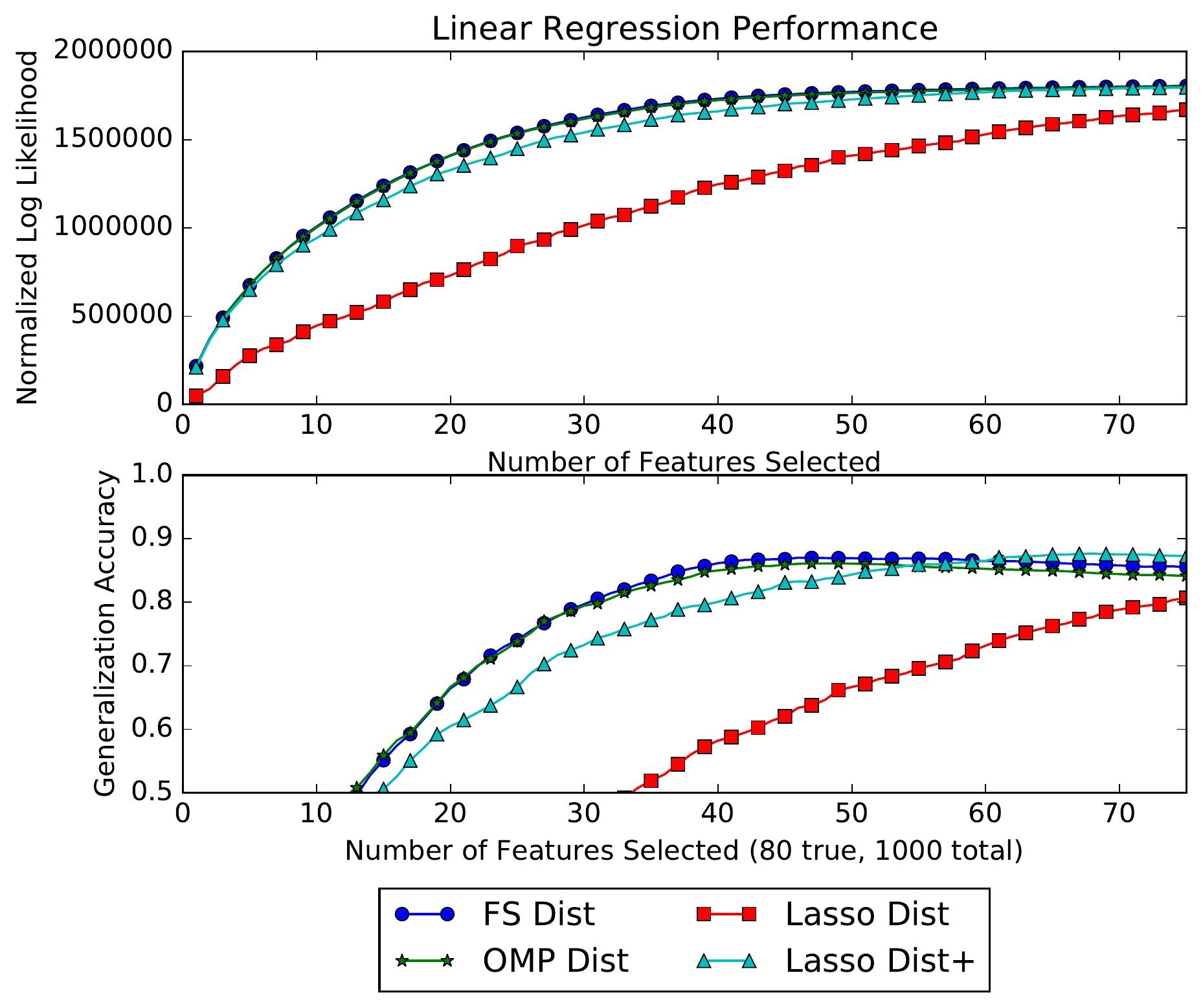}
\caption{}
\end{subfigure}
\begin{subfigure}{0.45\columnwidth}
\includegraphics[scale=0.4]{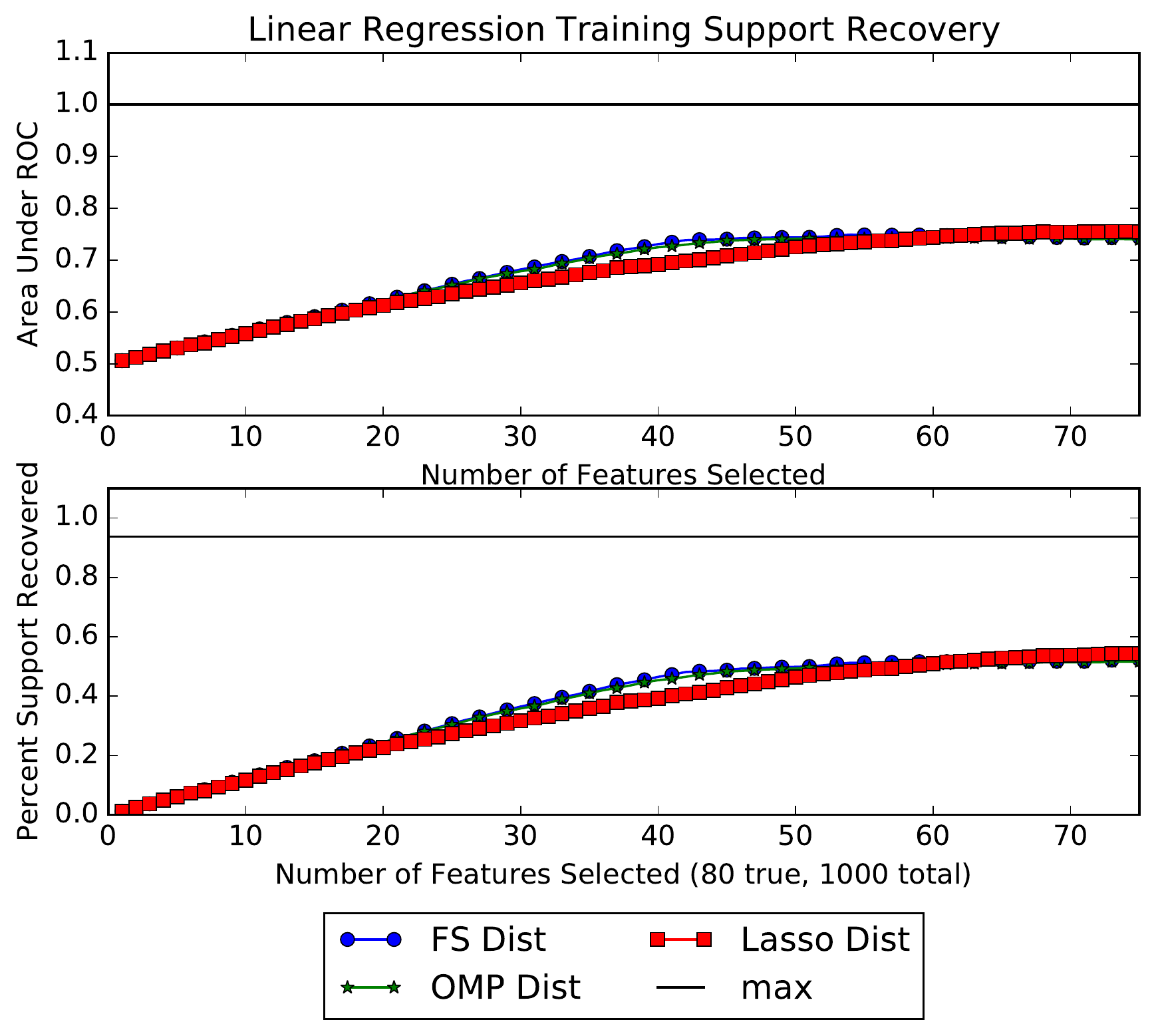}
\caption{}
\end{subfigure}
\caption{\label{fig:synthetic1} Distributed linear regression, $l=2$ partitions, $n=800$ training and test samples, $\alpha=0.5$}
\end{figure}

%

\end{document}